\renewcommand*{\backref}[1]{}
\renewcommand*{\backrefalt}[4]{%
    \ifcase #1%
          \or [Cited on page~#2.]%
          \else [Cited on pages~#2.]%
    \fi%
    }
\newcommand{\bfz}{\mathbf{z}}
\newcommand{\bfc}{\mathbf{c}}
\crefname{figure}{Fig.}{Figs.}
\crefname{equation}{Eq.}{Eqs.}
\crefname{definition}{Defn.}{Defns.}
\crefname{corollary}{Corollary}{Corollaries}
\crefname{proposition}{Prop.}{Props.}
\crefname{theorem}{Thm.}{Thms.}
\crefname{remark}{Remark}{Remarks}
\crefname{principle}{Principle}{Principles}
\crefname{lemma}{Lemma}{Lemmas}
\crefname{claim}{Claim}{Claims}
\crefname{table}{Tab.}{Tabs.}
\crefname{section}{\S}{\S\S}
\crefname{subsection}{\S}{\S\S}
\crefname{subsubsection}{\S}{\S\S}
\crefname{assumption}{Assumption}{Assumptions}
\crefname{appendix}{Appendix}{Appendices}
\crefname{algorithm}{Alg.}{Algs.}
\setlist[itemize]{leftmargin=*,itemsep=0.25em}
\setlist[enumerate]{leftmargin=*,itemsep=0.25em}
\newtheoremstyle{slplain}%
  {.4\baselineskip\@plus.1\baselineskip\@minus.1\baselineskip}%
  {.3\baselineskip\@plus.1\baselineskip\@minus.1\baselineskip}%
  {\itshape}%
  {}%
  {\bfseries}%
  {.}%
  { }%
  {}%
\theoremstyle{slplain} %
\newtheorem*{definition*}{Definition}
\newtheorem*{theorem*}{Theorem}
\newtheorem{theorem}{Theorem}[section]
\newtheorem{definition}[theorem]{Definition}
\theoremstyle{definition}
\newtheorem{remark}[theorem]{Remark}
\theoremstyle{plain} %
\numberwithin{equation}{section}
\newtheoremstyle{etplain}%
  {.0\baselineskip\@plus.1\baselineskip\@minus.1\baselineskip}%
  {.0\baselineskip\@plus.1\baselineskip\@minus.1\baselineskip}%
  {\itshape}%
  {}%
  {\bfseries}%
  {.}%
  { }%
  {}%
\newcommand{\norm}[1]{\left|\left| #1 \right|\right|}
\renewcommand\bar\overline
\newcommand{\brepr}{\bc}
\newcommand{\repr}{c}
\newcommand{\reprfn}{r}
\newcommand{\genfn}{g}
\newcommand{\probefn}{f}
\newcommand{\probefnset}{\calF}
\newcommand{\capty}{\kappa}
\DeclareMathOperator*{\argmin}{arg\,min}
\newcolumntype{C}[1]{>{\centering\let\newline\\\arraybackslash\hspace{0pt}}m{#1}}
\newcommand{\calF}{\ensuremath{\mathcal{F}}}
\newcommand{\bD}{\ensuremath{\bm{D}}}
\newcommand{\bP}{\ensuremath{\bm{P}}}
\newcommand{\bR}{\ensuremath{\bm{R}}}
\newcommand{\bW}{\ensuremath{\bm{W}}}
\newcommand{\bc}{\ensuremath{\bm{c}}}
\newcommand{\bx}{\ensuremath{\bm{x}}}
\newcommand{\by}{\ensuremath{\bm{y}}}
\newcommand{\bz}{\ensuremath{\bm{z}}}
\newcommand{\bbE}{\ensuremath{\mathbb{E}}}
\def\nd/{\textsuperscript{nd}}
\def\rd/{\textsuperscript{rd}}
\def\th/{\textsuperscript{th}}
\def\nnil{\nil}
\newcounter{prob}
\newcounter{dual}
\newenvironment{prob*}{%
	\csname equation*\endcsname%
	\aligned%
}{%
	\endaligned%
	\csname endequation*\endcsname%
}
\title{DCI-ES: An Extended Disentanglement Framework with Connections to Identifiability}
\author[1,2]{Cian Eastwood\thanks{Equal contribution.} \,}
\author[1]{Andrei Liviu Nicolicioiu$^{*}$}
\author[1,3]{Julius von K\"ugelgen$^{*}$}
\author[1]{\\Armin Keki\'c}
\author[1]{Frederik Tr\"auble}
\author[1,4]{Andrea Dittadi}
\author[1]{Bernhard Sch\"olkopf
}
\affil[1]{%
    Max Planck Institute for Intelligent Systems, T\"ubingen, Germany
}
\affil[2]{%
    School of Informatics, University of Edinburgh
}
\affil[3]{%
    Department of Engineering, University of Cambridge
  }
  \affil[4]{%
    Technical University of Denmark
  }
\newcommand{\rebuttal}[1]{#1}
\begin{document}

\maketitle
\begin{abstract}
\vspace{-1mm}
\looseness-1
In representation learning, a common approach
is to seek representations which disentangle the underlying factors of variation.
\citet{eastwood2018framework} proposed
three metrics for quantifying the quality of such disentangled representations: disentanglement~(D), completeness~(C) and informativeness~(I). In this work, we first connect this DCI framework to two common notions of linear and nonlinear identifiability, thereby establishing a formal link between disentanglement and the closely-related field of independent component analysis.
We then propose an extended DCI-ES framework with two new measures of representation quality---\textit{explicitness}~(E) and \textit{size}~(S)---and point out how D and C can be computed for black-box predictors. Our main idea is that the \textit{functional capacity required to use a representation} is an important but thus-far neglected aspect of representation quality, which we quantify using explicitness or \emph{ease-of-use}~(E). We illustrate the relevance of our extensions on the \texttt{MPI3D} and \texttt{Cars3D} datasets.
\end{abstract}

\section{Introduction}
\vspace{-1mm}
\looseness-1 A primary goal of representation learning is to learn representations~$\reprfn(\bx)$ of complex data~$\bx$ that ``make it easier to extract useful information when building classifiers or other predictors''~\citep{bengio2013representation}.
\emph{Disentangled} representations,
which aim to recover and separate (or, more formally, \textit{identify}) the underlying factors of variation~$\bz$ that generate the data as~$\bx = \genfn(\bz)$, are a promising step in this direction. In particular, it has been argued that
such representations
are not only interpretable~\citep{kulkarni2015deep, chen2016infogan}
but also 
make it easier to extract useful information for downstream tasks 
by recombining previously-learnt factors in novel ways~\citep{lake2017building}.

While there is no single, widely-accepted definition,
many evaluation protocols have been proposed to capture
different notions of disentanglement
based on the relationship
between the learnt representation or \emph{code} $\brepr = \reprfn(\bx)$ and the ground-truth data-generative factors $\bz$~\citep{higgins2017beta, eastwood2018framework, ridgeway2018learning, kim2018disentangling, chen2018isolating, suter2019robustly, shu2020weakly}.
\looseness-1 In particular,
the metrics
of \citet{eastwood2018framework}%
---\textit{disentanglement}~(D), \textit{completeness}~(C) and \textit{informativeness}~(I)---estimate this relationship by learning a \emph{probe} $\probefn$ to predict~$\bz$ from~$\brepr$
and can be used to relate many other notions of disentanglement (see \citealt[\S~6]{locatello2020sober}).
\looseness-1 In this work, we extend this DCI framework in several ways. 
Our main idea is that \textit{the functional capacity required 
to recover 
$\bz$ from 
$\brepr$
is an important but thus-far neglected
aspect of representation quality.}
\looseness-1 For example, consider the 
case of recovering $\bz$
from:
(i)~a noisy version thereof; (ii) raw, high-dimensional data (e.g.\ images); and (iii)~a {linearly-mixed} version thereof, with each $c_i$ containing the same amount of information about each $z_j$ (\rebuttal{precise definition in}~\cref{sec:exps:setup}). The noisy version (i) will do quite well with just linear capacity, but is fundamentally limited by the noise corruption; 
the raw data (ii) will likely do quite poorly with linear capacity,
but eventually outperform (i) given sufficient capacity; and the {linearly-mixed} version (iii) will perfectly recover $\bz$ with just linear capacity, yet achieve the worst-possible disentanglement score of $D\! =0$.
Motivated by this observation, 
we introduce a measure of \emph{explicitness} or \emph{ease-of-use} based a representation's
\emph{loss-capacity curve} (see \cref{fig:capacity_curves_fig1}).

\textbf{Structure and contributions.} 
First, we connect the DCI metrics to two common notions of linear and nonlinear identifiability~(\cref{sec:theory}).
Next, we propose an extended DCI-ES framework~(\cref{sec:extensions}) in which we: (i) introduce two new complementary measures of representation quality---\emph{explicitness}~(E), derived from a representation's \textit{loss-capacity curve}, and \emph{size}~(S); and then (ii) elucidate a means to compute the D and C scores for arbitrary black-box probes (e.g., MLPs).
Finally, in our experiments~(\cref{sec:exps}), we use our extended framework to compare different representations on the \texttt{MPI3D-Real}~\citep{gondal2019transfer_mpi3d} and \texttt{Cars3D}~\citep{reed2015deep_cars3d} datasets, illustrating the practical usefulness of our E score through its strong correlation with downstream performance.

\begin{figure*}[tb]
\vspace{-2em}
    \centering
    \centering
    \begin{subfigure}[b]{.335\textwidth}
      \centering
      \includegraphics[width=1.0\linewidth]{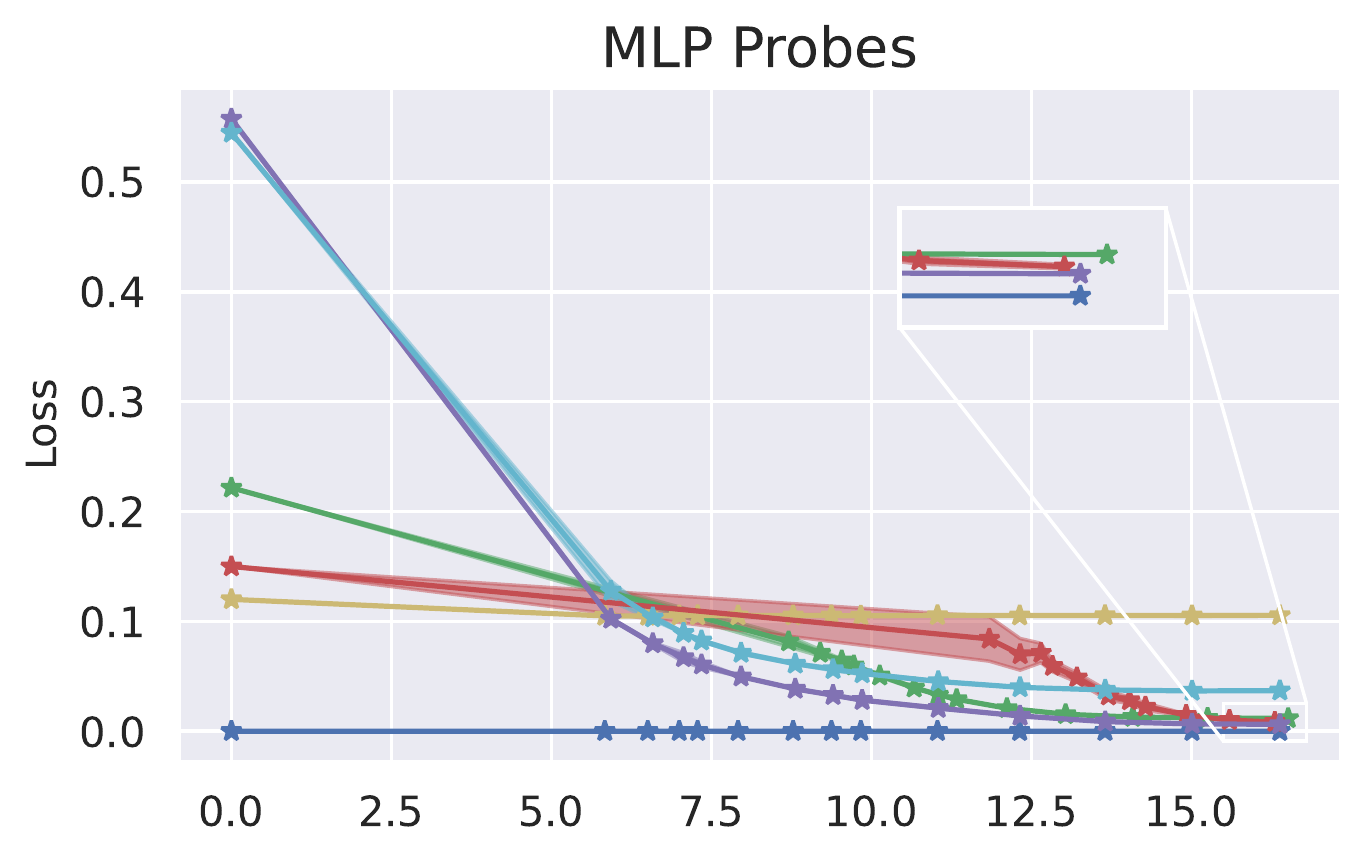}%
    \end{subfigure}%
    \begin{subfigure}[b]{.32\textwidth}
      \centering
      \includegraphics[width=1.0\linewidth]{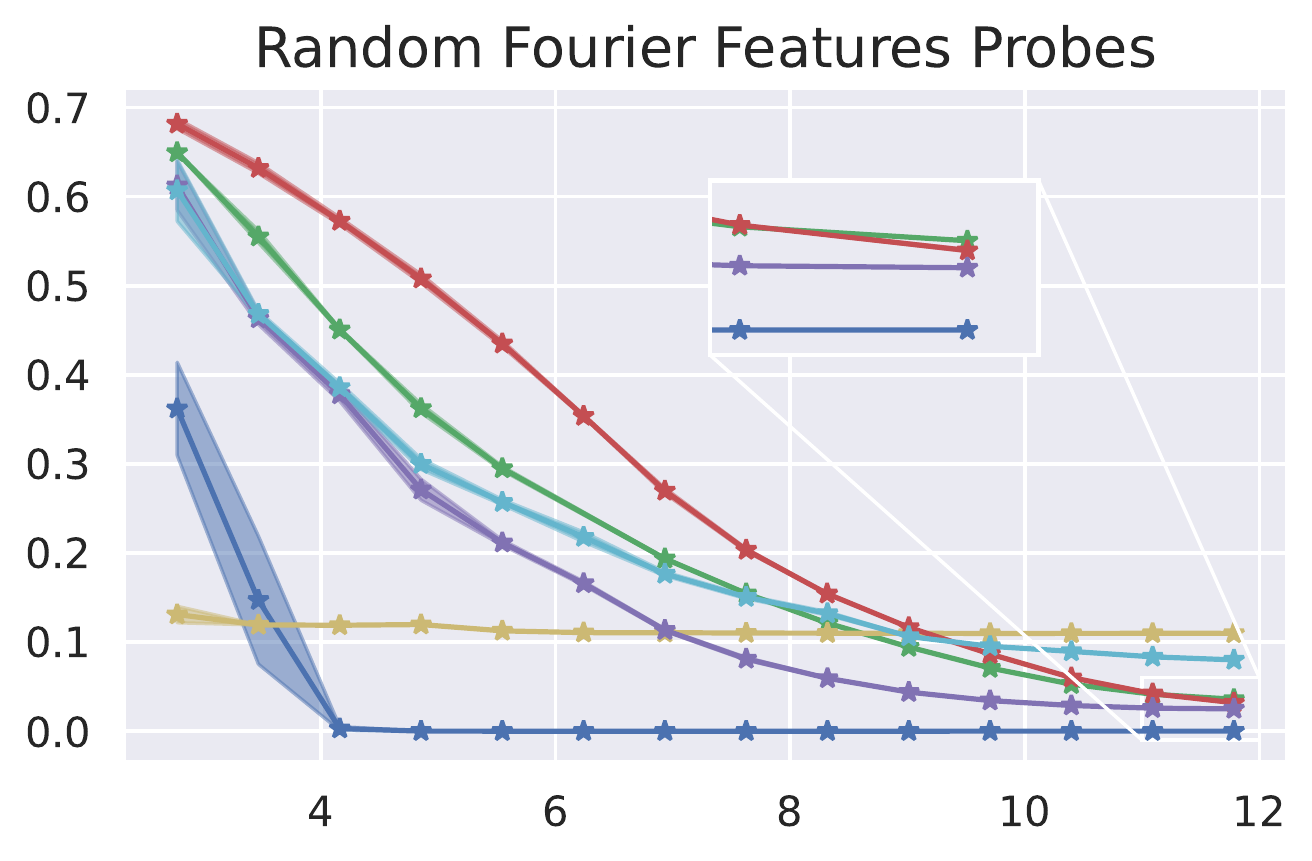}%
    \end{subfigure}
    \begin{subfigure}[b]{.32\textwidth}
      \centering
      \includegraphics[width=1.0\linewidth]{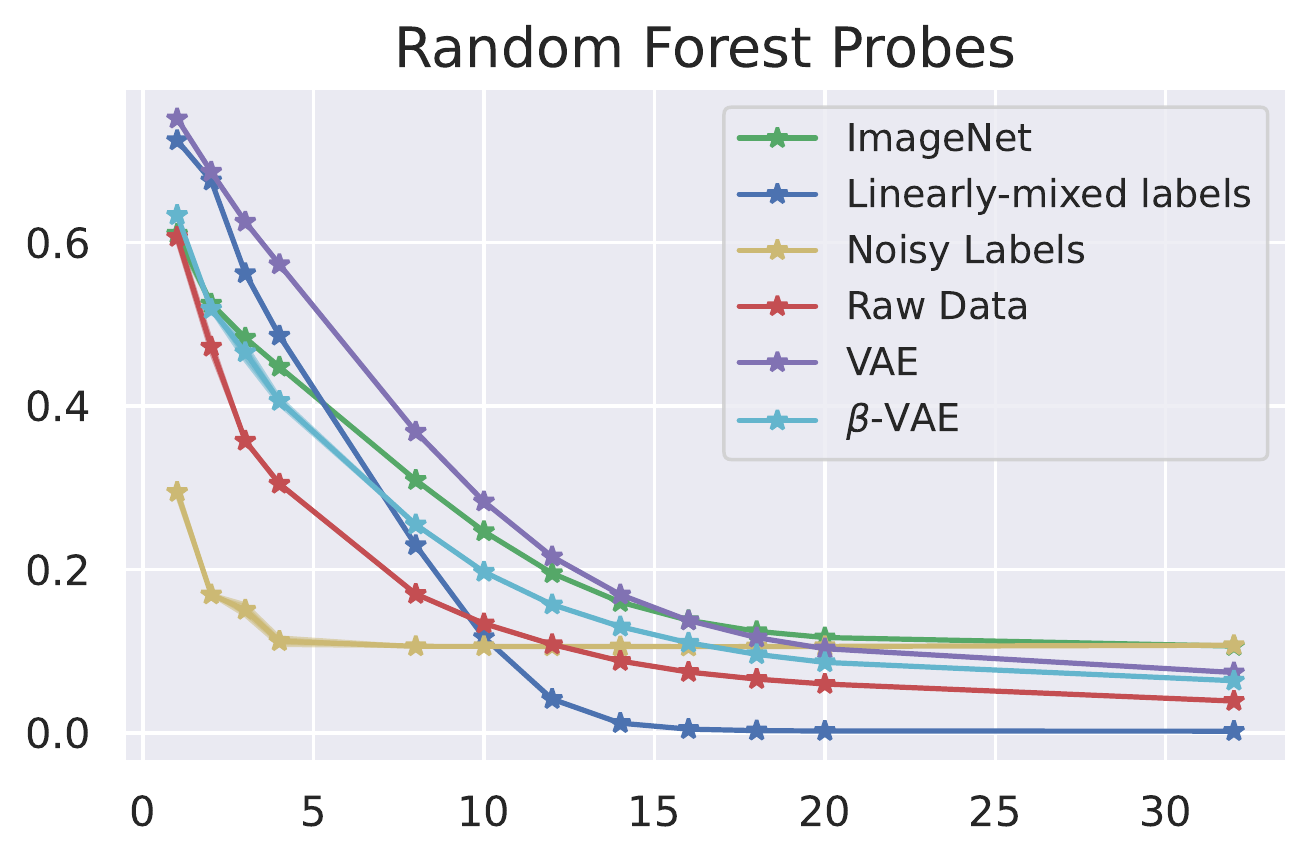}%
    \end{subfigure}
    \begin{subfigure}[b]{.34\textwidth}
      \centering
      \includegraphics[width=1.0\linewidth]{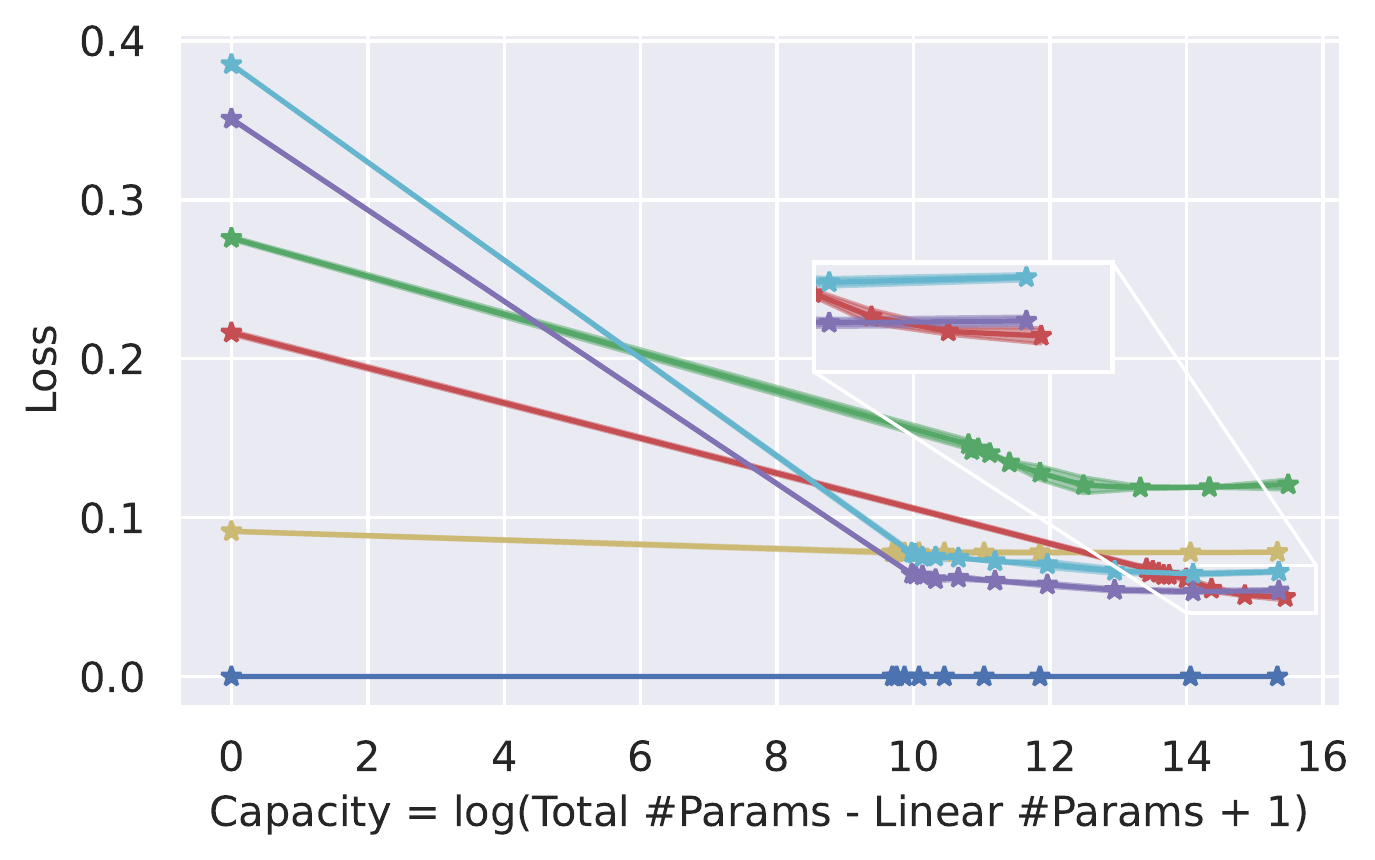}
    \end{subfigure}%
    \begin{subfigure}[b]{.32\textwidth}
      \centering
      \includegraphics[width=1.0\linewidth]{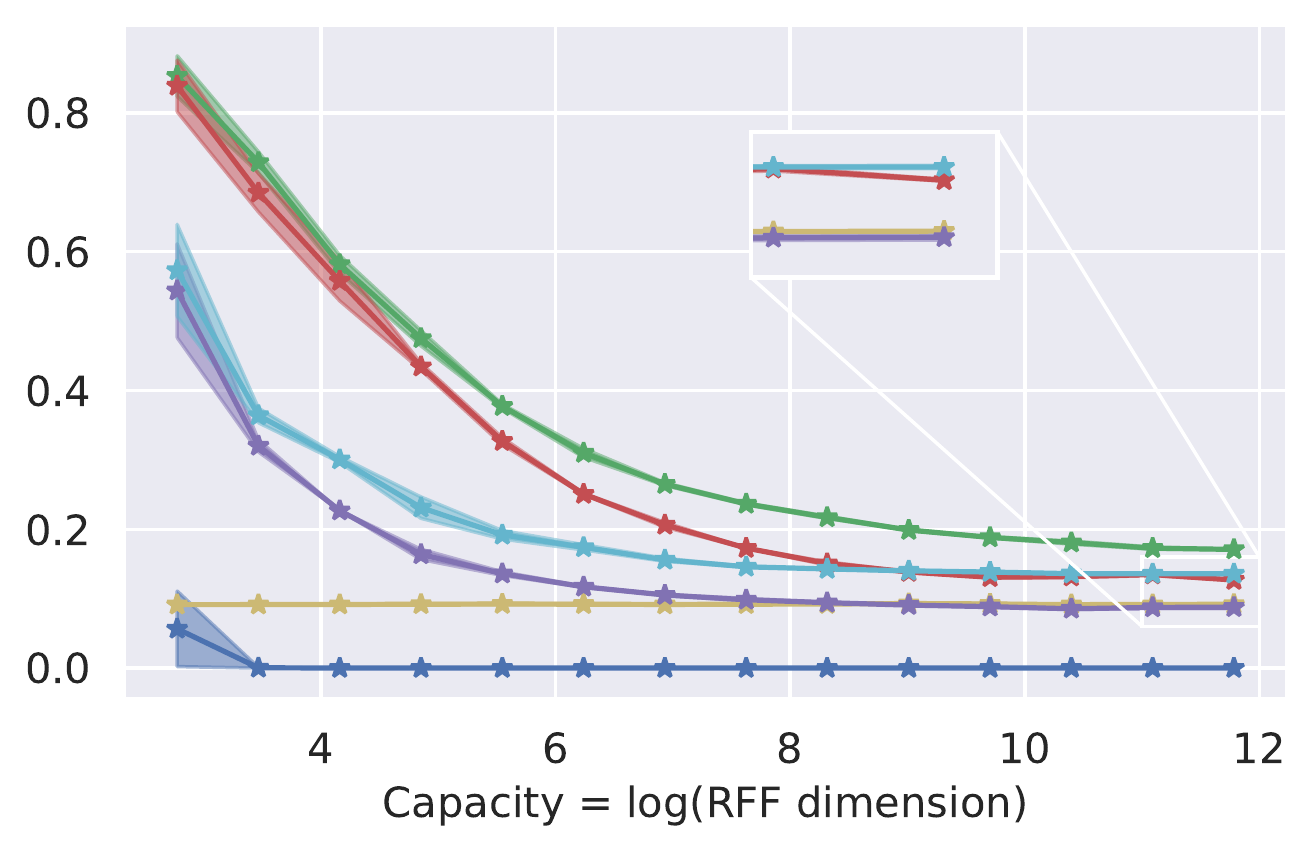}
    \end{subfigure}
    \begin{subfigure}[b]{.32\textwidth}
      \centering
      \includegraphics[width=1.0\linewidth]{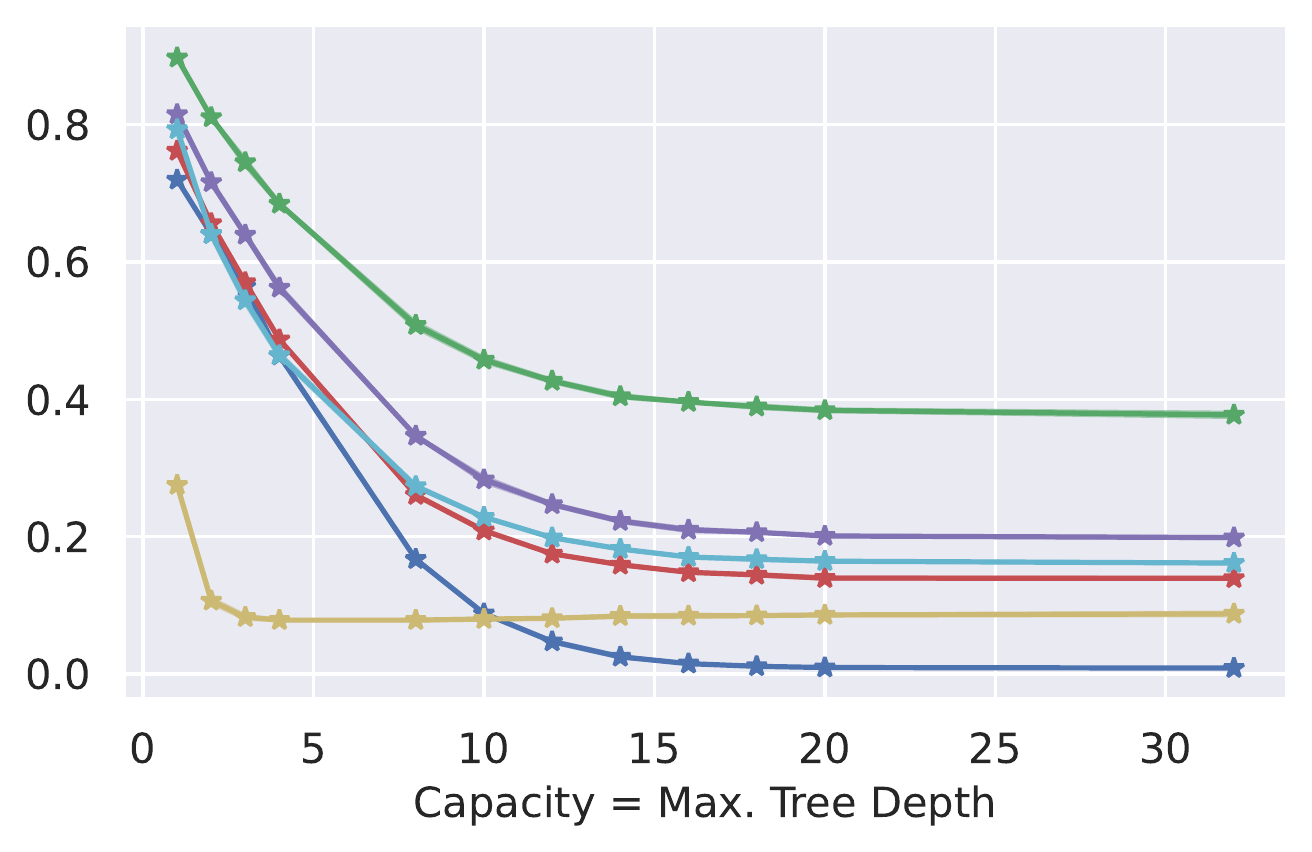}
    \end{subfigure}
    \caption{\small
    \textbf{Loss-capacity curves. 
    } 
    Empirical loss-capacity curves~(see~\cref{sec:extensions:simplicity}) for various representations (see legend), datasets (top: \texttt{MPI3D-Real}, bottom: \texttt{Cars3D}), and probe types (left: multi-layer perceptrons~/~MLPs, middle: Random Fourier Features~/~RFFs, right: Random Forests~/~RFs).
    The loss was first averaged over factors $z_j$, and then means and 95\% confidence intervals were computed over $3$ random seeds. Details in \cref{sec:exps}.
    }
    \vspace{-0.5em}
    \label{fig:capacity_curves_fig1}
    \end{figure*}

\section{Background}
\label{sec:background}
Given a synthetic dataset of observations $\bx\! =\! \genfn(\bz)$ along with the corresponding $K$-dimensional data-generating factors $\bz\in\mathbb{R}^K$, the DCI framework 
quantitatively evaluates an $L$-dimensional data representation or \emph{code} $\brepr\! =\! \reprfn(\bx)\in\mathbb{R}^L$ using two steps: (i) train a probe $\probefn$ to predict $\bz$ from $\bc$, i.e., $\hat{\bz}\! =\! \probefn(\brepr) =\! \probefn(\reprfn(\bx)) = \probefn(\reprfn(\genfn(\bz)))$; and then (ii) quantify $f$'s prediction error and its deviation from the ideal one-to-one mapping, namely a 
permutation matrix (with extra ``dead'' units in~$\brepr$ whenever~$L>K$).\footnote{W.l.o.g.,
it can be assumed that $z_i$ and $c_j$ are normalised to have mean zero and variance one for all~$i,j$, for otherwise such normalisation can be ``absorbed'' into
$g(\cdot)$
and~$r(\cdot)$.}
For step~(i), \citet{eastwood2018framework}~use Lasso~\citep{tibshirani1996regression} or Random Forests~(RFs, \citealt{breiman2001random}) as linear or nonlinear predictors, respectively, for which it is straightforward to read-off suitable ``relative feature importances''.
\begin{definition}
\label{def:relative_importance_matrix}
$\bR\in\mathbb{R}^{L\times K}$ is a \emph{matrix of relative importances} for predicting $\bz$ from $\bc$ via $\hat{\bz}=f(\bc)$ if
$R_{ij}$ captures some notion of the contribution 
of $\repr_i$ to predicting $z_j$ s.t.\ $\forall i,j$: $R_{ij}\geq 0$
and  $\sum_{i=1}^L R_{ij}=1$.
\end{definition}
\looseness-1
For step (ii), \citeauthor{eastwood2018framework} use $\bR$ and the prediction error to define and quantify three 
desiderata of disentangled representations: \textit{disentanglement}~(D), \textit{completeness}~(C), and \textit{informativeness}~(I).

\textbf{Disentanglement.}
\looseness-1
Disentanglement~(D) measures the average number of data-generating factors 
$z_j$
that are captured by any single code~%
$\repr_i$. The score $D_i$ 
is given by ${D_i = 1-H_K(P_{i.})}$, where
$H_K(P_{i.}) =-\sum_{k=1}^{K}P_{ik}\log_K P_{ik}$ denotes the entropy of the distribution $P_{i.}$ over \textit{row}~$i$ of $R$, with $P_{ij}=R_{ij}/ \sum_{k=1}^{K}R_{ik}$.
If $\repr_i$ is only important for predicting a single~$z_j$, we get a perfect score of $D_i=1$. If $\repr_i$ is equally important for predicting all $z_j$ (for $j\!=\!1,\dots,K$), we get the worst score of $D_i=0$. 
The overall score $D$ is then given by the 
weighted average~$D=\sum_{i=1}^L \rho_iD_i$, with  $\rho_i=\frac{1}{K}\sum_{k=1}^{K}R_{ik}$.

\textbf{Completeness.} Completeness (C)
measures the average number of code variables~%
$\repr_i$
required to capture any single~$z_j$; it has also been called \emph{compactness}~\citep{ridgeway2018learning}. The score $C_j$ in capturing $z_j$ is given by $C_j = (1-H_L(\tilde{P}_{.j}))$, where $H_L(\tilde{P}_{.j})\! =\! -\sum_{\ell=1}^{L}\tilde{P}_{\ell j}\log_L \tilde{P}_{\ell j}$ denotes the entropy of the distribution $\tilde{P}_{.j}$ over \textit{column}~$j$ of~$\bR$, with $\tilde{P}_{ij}=R_{ij}$.
If a single $\repr_i$ contributes to $z_j$'s prediction, we get a perfect score of $C_j\! =\! 1$. If all $\repr_i$ 
equally contribute to $z_j$'s prediction (for $i\!=\!1,\dots,L$), we get the worst score of $C_j\! =\! 0$.
\looseness-1 The overall completeness score is given by $C\! =\! \frac{1}{K}\sum_{j=1}^K C_j$.
\begin{remark}
\textit{Together}, D and C quantify the degree of ``mixing'' between $\bc$ and $\bz$, i.e., the deviation from a one-to-one mapping.
\looseness-1 They are reported separately as they capture distinct criteria.
\end{remark}
\textbf{Informativeness.} 
\looseness-1 The informativeness (I) of representation~$\brepr$ about data-generative factor $z_j$ is quantified by the prediction error, i.e., $I_j = 1 - \mathbb{E}[\ell(z_j, f_j(\brepr))]$, where $\ell$ is an appropriate loss function.\footnote{\looseness-1 Here we deviate from \citeauthor{eastwood2018framework} (who had $I_j = \mathbb{E}[\ell(z_j, f_j(\brepr))]$) such that $1$ is now the best score.} Note that $I_j$ depends on the capacity of $f_j$, as depicted in \cref{fig:capacity_curves_fig1}. Thus, for $I_j$ to accurately capture the informativeness of $\bc$ about $z_j$, $f_j$ must have sufficient capacity to extract \emph{all} of the information in $\bc$ about $z_j$. This capacity-informativeness dependency motivates a separate measure of representation \emph{explictness} in \cref{sec:extensions:simplicity}.
The overall informativeness score is given by
$I=\frac{1}{K}\sum_{j=1}^K I_j$. 
\section{Connection to identifiability}\label{sec:theory}
The goal of learning a data representation which recovers the underlying
data-generating factors is closely related to blind source separation and independent component analysis (ICA,~\citealt{comon1994independent,hyvarinen1999nonlinear,hyvarinen2019nonlinear}). Whether a given  learning algorithm provably achieves this goal up to acceptable ambiguities, subject to certain assumptions on the data-generating process, is typically formalised using the notion of \emph{identifiability}. Two common types of identifiability for linear and nonlinear settings, respectively, are the following.
\begin{definition}
\label{def:monomial-ident}
We say that $\bc=r(\bx)=r(g(\bz))$ \textit{identifies $\bz$ up to sign and permutation} if $\bc=\bP\bz$ for some signed permutation matrix $\bP$ (i.e., $|\bP|$ is a permutation).
\end{definition}%
\begin{definition}
\label{def:elementwise-ident}
\looseness-1 We say $\bc$ \textit{identifies $\bz$ up to permutation and element-wise reparametrisation} if there exists a permutation $\pi$ of $\{1,...,K\}$ and invertible scalar-functions $\{h_k\}_{k=1}^K$ s.t.\  $\forall j$: $c_j=h_j(z_{\pi(j)})$.
\end{definition}
\looseness-1  We now establish theoretical connections between the DCI framework and these identifiability types.
\begin{restatable}[]{proposition}{Rpermutation}
\label{prop:R_permutation_matrix}
If $D=C=1$ and $K=L$ (i.e., $\text{dim}(\bfc) = \text{dim}(\bfz)$), then $\bR$ is a permutation matrix.
\end{restatable}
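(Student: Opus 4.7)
\textbf{Proof plan for \cref{prop:R_permutation_matrix}.} The plan is to exploit the two hypotheses $C=1$ and $D=1$ separately: the first pins down the structure of every column of $\bR$, and the second then pins down the structure of every row. Together with $K=L$, these give a $K\times K$ $0/1$ matrix with exactly one $1$ in every row and column, which is the definition of a permutation matrix.

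First I would use $C=1$. Since $C=\tfrac{1}{K}\sum_j C_j$ is an average of quantities bounded above by $1$, $C=1$ forces $C_j=1$ for every $j$. By definition $C_j=1-H_L(\tilde P_{.j})$ with $\tilde P_{ij}=R_{ij}$, and by \cref{def:relative_importance_matrix} the column $\tilde P_{.j}$ is already a probability distribution (its entries are nonnegative and sum to one). Zero entropy of a probability distribution on a finite set means it is a Dirac mass, so each column of $\bR$ has exactly one nonzero entry, equal to $1$. Hence $\bR$ contains exactly $K$ ones and $LK-K$ zeros.

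Next I would use $D=1$ to rule out any row of $\bR$ holding two or more of those ones. The row weights satisfy $\sum_i\rho_i=\tfrac{1}{K}\sum_i\sum_k R_{ik}=\tfrac{1}{K}\sum_k 1=1$, so $\{\rho_i\}$ is itself a probability distribution. Writing $D=\sum_{i:\rho_i>0}\rho_i D_i\le \sum_{i:\rho_i>0}\rho_i=1$ (using $D_i\in[0,1]$), the equality $D=1$ forces $D_i=1$ for every $i$ with $\rho_i>0$. As above, $D_i=1-H_K(P_{i.})=1$ means the row distribution $P_{i.}$ is a Dirac mass, i.e., row $i$ has a single nonzero entry.

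Finally I would combine the two structural statements. Every row with nonzero sum has exactly one nonzero entry (of some positive value); every column has exactly one nonzero entry, equal to $1$. Hence the nonzero entries of $\bR$ are all $1$, and counting gives: the number of nonzero rows equals the total number of nonzero entries, which equals $K$ (one per column), while the number of rows is $L=K$. Thus every row is nonzero and contains exactly one $1$, every column contains exactly one $1$, and $\bR$ is a $K\times K$ permutation matrix. The only subtlety worth flagging is the convention for $D_i$ on a zero row (where $P_{i.}$ is undefined), but this is harmless because such rows contribute $\rho_i D_i=0$ to $D$ regardless of the convention; the pigeonhole argument above shows no zero rows can occur anyway.
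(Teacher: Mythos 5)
Your proof is correct and follows essentially the same route as the paper's: $C=1$ and $D=1$ force zero entropy, hence one-hot (Dirac) columns and rows, and the column-sum constraint makes every nonzero entry equal to $1$. If anything, you are slightly more careful than the paper, which asserts $D=1\iff\forall i:\,D_i=1$ without addressing the possibility of zero rows (where $\rho_i=0$ and $P_{i.}$ is undefined); your counting/pigeonhole step closes that small gap cleanly.
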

\rebuttal{All proofs are provided in}~\cref{app:proofs}.
Using~\cref{prop:R_permutation_matrix}, we can establish links to identifiability, provided the inferred representation~$\bc$ perfectly predicts the true data-generating factors~$\bz$, i.e.,~$I=1$.
\begin{restatable}[]{corollary}{linearcor}\label{cor:linear}
Under the same conditions as~\cref{prop:R_permutation_matrix}, if $\bz=\bW^\top\bc$ (so that $I=1$) \rebuttal{for some $\bW$ with  
$R_{ij}=\frac{|w_{ij}|}{\sum_{i=1}^L |w_{ij}|}$}, then
$\bc$ identifies $\bz$ up to permutation and sign~(\cref{def:monomial-ident}).
\end{restatable}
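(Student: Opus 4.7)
The plan is to combine \cref{prop:R_permutation_matrix} with the specific linear form $\bz = \bW^\top\bc$ and the unit-variance normalisation of $\bc$ and $\bz$ (see the footnote in \cref{sec:background}) to force $\bW$ to be a signed permutation matrix. Since the hypotheses of \cref{prop:R_permutation_matrix} are satisfied, I would first invoke that proposition to conclude that $\bR\in\mathbb{R}^{K\times K}$ is a permutation matrix: each column $j$ has a unique $1$ at some row $\sigma(j)$ and zeros elsewhere, and the indices $\{\sigma(j)\}_{j=1}^K$ form a permutation of $\{1,\dots,K\}$.

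Next, I would translate this sparsity pattern of $\bR$ into one for $\bW$ using the stated relation $R_{ij}=|w_{ij}|/\sum_{i'=1}^L|w_{i'j}|$. For any fixed $j$, the equalities $R_{\sigma(j),j}=1$ and $R_{ij}=0$ for $i\neq\sigma(j)$ force $w_{ij}=0$ for all $i\neq\sigma(j)$ while $w_{\sigma(j),j}\neq 0$. Substituting into $\bz=\bW^\top\bc$ collapses the sum $z_j=\sum_i w_{ij}c_i$ to the single scalar equation $z_j = w_{\sigma(j),j}\,c_{\sigma(j)}$ for every $j\in\{1,\dots,K\}$.

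To pin down the nonzero magnitudes, I would take variances on both sides and apply the normalisation convention $\var(z_j)=\var(c_i)=1$, which gives $w_{\sigma(j),j}^2=1$; setting $s_j := w_{\sigma(j),j}$ we then have $s_j\in\{-1,+1\}$. Inverting each scalar equation (using $s_j^2=1$) yields $c_{\sigma(j)}=s_j z_j$ for every $j$, which in matrix form reads $\bc=\bP\bz$ for the signed permutation matrix $\bP$ whose only nonzero entry in column $j$ is $s_j$ at row $\sigma(j)$. This matches \cref{def:monomial-ident}, as required.

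The main subtlety is bookkeeping: verifying that the ``one nonzero per column'' pattern of $\bR$ automatically yields ``one nonzero per row'' of $\bW$ (which uses that $\bR$ is square and a permutation matrix, not merely column-sparse), and correctly tracking the transpose in $\bz=\bW^\top\bc$ when passing from column-wise statements about $\bR$ to the scalar equations for $z_j$. A useful remark is that without the unit-variance convention the same argument would only yield $\bc=\bP\bz$ for $\bP$ a nonsingular \emph{monomial} (generalised permutation) matrix, i.e., identifiability up to permutation and nonzero scaling rather than sign; the normalisation footnote is precisely what removes the scaling ambiguity and upgrades the conclusion to \cref{def:monomial-ident}.
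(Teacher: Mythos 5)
Your proof is correct and follows essentially the same route as the paper's: use \cref{prop:R_permutation_matrix} to get the permutation structure of $\bR$, transfer the sparsity pattern to $\bW$ via the given formula for $R_{ij}$, and then use the unit-variance normalisation to force the single nonzero entry in each column to be $\pm 1$. The only step the paper includes that you elide is verifying that the denominators $\sum_{i=1}^L |w_{ij}|$ are strictly positive (so that $\bR$ is well-defined), which follows from $\var(z_j)=1$; this is a minor omission rather than a gap.
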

For nonlinear~$f$, we give a more general statement for suitably-chosen feature-importance matrices~$\bR$.
\vspace{-0.5em}
\begin{restatable}[]{corollary}{nonlinearcor}
\label{cor:id_gen}
Under the same conditions as~\cref{prop:R_permutation_matrix}, let $\bz=f(\bc)$ (so that $I=1$) with $f$ an invertible \rebuttal{and differentiable} nonlinear function, and let $\bR$ be a matrix of relative feature importances for $f$~(\cref{def:relative_importance_matrix}) with the property that $R_{ij}=0$ \emph{if and only if} $f_j$ does not depend on $c_i$, i.e., ${\norm{\partial_i f_j}}_2=0$.
Then $\bc$ identifies $\bz$ up to permutation and element-wise reparametrisation~(\cref{def:elementwise-ident}).
\end{restatable}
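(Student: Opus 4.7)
The plan is to use \cref{prop:R_permutation_matrix} to extract a combinatorial skeleton from the assumptions $D=C=1$ and $K=L$, and then upgrade that skeleton to an element-wise identification by combining invertibility of $f$ with the equivalence $R_{ij}=0 \Leftrightarrow {\norm{\partial_i f_j}}_2=0$.

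First, I would invoke \cref{prop:R_permutation_matrix} to conclude that $\bR$ is a $K\times K$ permutation matrix, so there exists a permutation $\pi$ of $\{1,\dots,K\}$ with $R_{\pi(j),j}\neq 0$ and $R_{ij}=0$ for all $i\neq\pi(j)$. The assumed equivalence then gives $\partial_i f_j\equiv 0$ whenever $i\neq\pi(j)$, which means each component $f_j$ depends only on the single coordinate $c_{\pi(j)}$. Hence we may write $z_j = f_j(\bc) = \tilde{f}_j(c_{\pi(j)})$ for some differentiable scalar function $\tilde{f}_j:\mathbb{R}\to\mathbb{R}$.

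The main obstacle is the second step: showing that each $\tilde{f}_j$ is itself an invertible scalar map. The cleanest route is a direct injectivity argument that exploits the coordinate-wise dependency structure. Suppose, for contradiction, that $\tilde{f}_j(a)=\tilde{f}_j(b)$ for some $a\neq b$, and pick any $\bc\in\mathbb{R}^K$. Form $\bc'$ and $\bc''$ from $\bc$ by replacing only the $\pi(j)$-th coordinate with $a$ and $b$, respectively; then $\bc'\neq\bc''$. For $k=j$ we have $f_j(\bc')=\tilde{f}_j(a)=\tilde{f}_j(b)=f_j(\bc'')$, and for $k\neq j$ the function $f_k$ depends only on $c_{\pi(k)}$ with $\pi(k)\neq\pi(j)$ (since $\pi$ is a bijection), so $f_k(\bc')=f_k(\bc'')$. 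Thus $f(\bc')=f(\bc'')$, contradicting invertibility of $f$. Hence each $\tilde{f}_j$ is injective and therefore admits a scalar inverse on its image. I briefly note that one cannot shortcut this by asking that the Jacobian be nonsingular everywhere: a $C^1$ bijection may still have isolated critical points, as $x\mapsto x^3$ on $\mathbb{R}$ illustrates, so the structural injectivity argument above is the cleanest route.

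Finally, inverting gives $c_{\pi(j)}=\tilde{f}_j^{-1}(z_j)$. Setting $\sigma:=\pi^{-1}$ and $h_i:=\tilde{f}_{\sigma(i)}^{-1}$ yields $c_i=h_i(z_{\sigma(i)})$ with $\sigma$ a permutation of $\{1,\dots,K\}$ and each $h_i$ an invertible scalar function, matching \cref{def:elementwise-ident}. The leverage is that \cref{prop:R_permutation_matrix} collapses the multivariate dependency pattern to a single permutation, the hypothesis linking $R$ to the support of $\partial f$ promotes this from an ``importance'' statement to a genuine analytic one, and then scalar invertibility of each $\tilde{f}_j$ follows directly from multivariate invertibility of $f$.
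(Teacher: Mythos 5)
Your proof follows essentially the same route as the paper's: apply \cref{prop:R_permutation_matrix} to get a permutation matrix, use the ``if and only if'' hypothesis to conclude each $f_j$ depends on the single coordinate $c_{\pi(j)}$, and then invert. The only difference is that the paper compresses the last step into ``by invertibility of $f$,'' whereas you spell out the injectivity argument for each scalar component $\tilde{f}_j$ --- a worthwhile detail, correctly executed, but not a different approach.
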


\begin{remark}\label{remark:sage_importances}
While the \textit{if} part of~\cref{cor:id_gen} holds for most feature importance measures, the \textit{only if} part, in general, does not: not using a feature~$c_i$ is typically a \textit{sufficient} condition for $R_{ij}=0$, but it need not be a \textit{necessary} condition (as required for~\cref{cor:id_gen}). 
E.g., measures based on \textit{average} performance
may not satisfy this since a feature may not contribute on average, but still be used---sometimes helping and sometimes hurting performance (see~\cref{sec:discussion} for further discussion). In contrast, Gini importances, as used in random forests, \textit{do} satisfy the necessary condition.
\looseness-1 While the non-invertibility of random forests prevents an explicit link to identifiability (typically studied for continuous features), they can still be a principled choice in practice (where features are often categorical).
\end{remark}

\textbf{Summary.} We have established that the learnt representation $\bc$ identifies the ground-truth $\bz$ up to:
\begin{itemize}[topsep=0pt,itemsep=0pt,partopsep=0pt, parsep=0pt]
    \item sign and permutation if $D\! =\! C\! =\! I\! =\! 1$ and $f$ is linear;
    \item permutation and 
    element-wise
    reparametrisation if $D\! =\! C\! =\! I\! =\! 1 \text{ and } R_{ij}=0\Leftrightarrow {\norm{\partial_i f_j}}_2=0$. %
\end{itemize}\vspace{1.5mm}

\vspace{-0.5em}
\section{Extended DCI-ES Framework}
\label{sec:extensions}
\looseness-1
Motivated by our theoretical insights from~\cref{sec:theory}---considering different probe function classes provides links to different types of identifiability---and the empirically-observed performance differences between representations
trained with different-capacity probes shown in~\cref{fig:capacity_curves_fig1}, we now propose several extensions of the DCI framework.

\subsection{Explicitness (E)}\label{sec:extensions:simplicity}
We first introduce a new complementary notion of disentanglement based on the functional capacity required to recover or predict $\bz$ from $\brepr$.
The key idea is to measure the \textit{explicitness} or \textit{ease-of-use}~(E) of a representation using 
its \emph{loss-capacity curve}.

\textbf{Notation.} Let $\probefnset$ be a probe function class (e.g., MLPs or RFs), let $\probefn_j^* \in \argmin_{\probefn \in \probefnset} \mathbb{E}[\ell(z_j, \probefn(\brepr))]$ be a minimum-loss probe for factor $z_j$ on a held-out data split\footnote{In practice, all expectations are taken w.r.t.\ the corresponding empirical (train/validation/test) distributions.},
and let $\text{Cap}(\cdot)$ be a suitable capacity measure on $\probefnset$---%
e.g., for RFs, $\text{Cap}(\probefn)$ could correspond to the maximum tree-depth of~$f$.

 \textbf{Loss-capacity curves.} A loss-capacity curve for representation $\brepr$, factor $z_j$, and probe class $\probefnset$ displays test-set loss against probe capacity for increasing-capacity probes $\probefn \in \probefnset$ (see \cref{fig:capacity_curves_fig1}). To plot such a curve, we must train $T$ predictors with capacities $\capty_1, \dots, \capty_T$ to predict $z_j$, with
\begin{equation}
\textstyle
\label{eq:f_t}%
    f_j^t \in \argmin_{f\in\calF} \bbE\left[\ell(z_j, f(\brepr))\right]
    \quad\text{s.t.}\quad \text{Cap}(f) = \capty_t.
\end{equation}
Here $\capty_1, \dots, \capty_T$ is a list of $T$ increasing probe \textit{capacities},
ideally\footnote{True for RFs but not input-size dependent MLPs (see~\cref{sec:exps}).} shared by all representations, with suitable choices for $\capty_1$ and $\capty_T$ depending on both $\probefnset$ and the dataset. For example, we may choose $\capty_T$ to be large enough for all representations to achieve their lowest loss and, for random forest $f$s, we may choose \rebuttal{an initial tree depth of} $\capty_1 = 1$ and then $T - 2$ tree depths between $1$ and $\capty_T$.

\begin{wrapfigure}{r}{0.4\textwidth}
\centering
\vspace{-1em}
\includegraphics[width=0.94\linewidth]{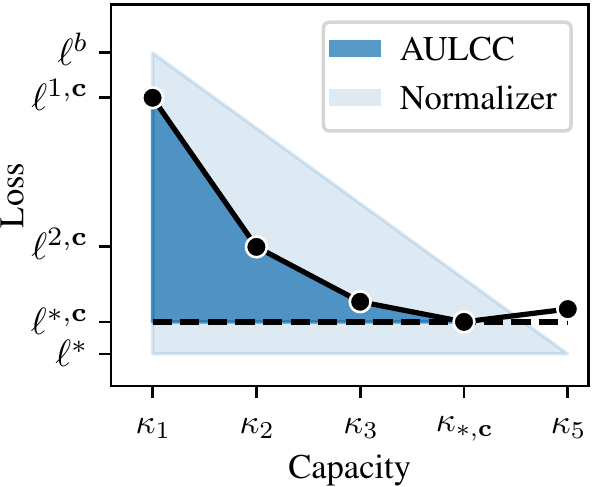}
\caption{\small \looseness-1 
\textbf{Explicitness via the area under the loss-capacity curve~(AULCC).}
Here, $\capty_1, ..., \capty_T$ (x-axis) are a sequence of increasing function-capacities and $\ell^{1, \bfc}, ..., \ell^{T, \bfc}$ (y-axis) are the losses achieved by the corresponding optimal predictors for $\bfc$. The lowest loss $\ell^{*, \bfc}$ is achieved at capacity $\capty_{*, \bfc}$, while $\ell^b$ and $\ell^*$ are suitable baseline and best-possible losses for the probe class.
}%
\label{fig:lc-curve-area}
\vspace{-3.5em}
\end{wrapfigure}

\textbf{AULCC.} We next define the \textit{Area Under the Loss-Capacity Curve} (AULCC) for representation~$\brepr$, factor $z_j$, and probe class $\probefnset$ as the (approximate) area between the corresponding loss-capacity curve and the loss-line of our best predictor $\ell_j^{*,\bc} = \mathbb{E}[\ell(z_j, \probefn_j^*(\brepr))]$. \looseness-1 To compute this area, depicted in \cref{fig:lc-curve-area}, we use the trapezoidal~rule
\begin{align*}%
    \text{AULCC}(z_j, \brepr; \calF)\! =\! \sum_{t=2}^{t^{*,\bc}} \left( \frac{1}{2}\left(\ell_j^{t-1, \bfc}\! +\! \ell_j^{t, \bfc}\right)\! -\! \ell_j^{*,\bc} \right)\! \cdot\!
    \Delta \capty_t,
\end{align*}
\looseness-1
where $t^{*,\bc}$ denotes the index of $\bc$'s lowest-loss capacity~$\capty_{*,\bc}$; $\ell_j^{t, \bfc}\! =\! \bbE[\ell(z_j, \probefn_j^t(\brepr))]$ the test-set loss with predictor $\probefn_j^t$, see~\cref{eq:f_t}; and $\Delta \capty_t\! =\! \capty_t\! -\! \capty_{t - 1}$ the size of the capacity interval at step~$t$. \looseness-1 If the lowest loss is achieved at the lowest capacity, i.e.\ $t^{*,\bc}\! =\! 1$, we set $\text{AULCC}\! =\! 0$.

\looseness-1 \textbf{Explicitness.} We define the \textbf{explicitness}~(E) of representation $\brepr$ for predicting factor $z_j$ with predictor class $\probefnset$ as
\begin{align*}
    E(z_j, \brepr; \calF) = 1 - \frac{\text{AULCC}(z_j, \brepr; \calF)}{\frac{1}{2}(\capty_T - \capty_1)(\ell_j^b - \ell_j^*)},
\end{align*}
where $\ell_j^b$ is a suitable baseline loss (e.g., that of $\bbE[z_j]$) and $\ell_j^*$ a suitable lowest loss (e.g., $0$) for $\calF$. Here, the denominator represents the area of the light-blue triangle in~\cref{fig:lc-curve-area}, \emph{normalizing} the AULCC such that $E_j \in [-1,1]$ so long as $\ell_j^* < \ell^b_j$. 
The best score $E_j = 1$ means that the best loss was achieved with the lowest-capacity probe $f_j^1$, i.e., $\ell_j^{*,\bc}\! =\! \ell_j^{1,\bfc}$ and $\capty_{*,\bc}\! =\! \capty_1$, and thus our representation $\brepr$ was explicit or easy-to-use for predicting $z_j$ with $\probefn \in \probefnset$ since there was \emph{no surplus capacity required (beyond $\kappa_1$) to achieve our lowest loss}. In contrast, $E_j=0$ means that the loss reduced \emph{linearly} from $\ell_j^b$ to $\ell_j^*$ with increased probe capacity, i.e., $\text{AULCC}=\text{Normalizer}$ in \cref{fig:lc-curve-area}. More generally, if $\ell^{*,\bc} = \ell^*$, i.e.\ the lowest loss for $\calF$ can be reached with representation $\bc$, then $E_j < 0$ implies that the loss decreased \textit{sub-linearly} with increased capacity while $E_j > 0$ implies it decreased \textit{super-linearly}. The overall explicitness score is given by $E = \frac{1}{K} \sum_{j=1}^K E_j$.

\textbf{E vs.\ I.} While the informativeness score $I_j$ captures the (total) amount of information in $\bc$ about $z_j$, the explicitness score $E_j$ captures the \emph{ease-of-use} of this information. In particular, while $I_j$ is quantified by the \emph{lowest prediction error with any capacity} $\ell^{*,\bc}$, corresponding to a single point on $\bc$'s loss-capacity curve, $E_j$ is quantified by the \emph{area under this curve}.

\textbf{A fine-grained picture of identifiability.} Compared to the commonly-used mean correlation coefficient (MCC) or Amari distance~\citep{amari1996new, yang1997adaptive}, the $D,C,I,E$ scores represent empirical measures which: (i) easily extend to mismatches in dimensionalities, i.e., $L > K$; and (ii) provide a more fine-grained picture of identifiability (violations), for if the initial probe capacity $\kappa_1$ is linear and $\bR$ satisfies \cref{cor:id_gen}, we have that:
\begin{itemize}[topsep=0pt,itemsep=0pt,partopsep=0pt, parsep=0pt]
    \item $D\! =\! C\! =\! I\! =\! E\! =\! 1 \implies$ identified up to sign and permutation~(\cref{def:monomial-ident});
    \item $D\! =\! C\! =\! I\! =\! 1 \implies$ identified up to permutation and element-wise reparametrisation~(\cref{def:elementwise-ident});
    \item $I\! =\! E\! =\! 1\! \implies\!$ \looseness-1 identified up to invertible linear transformation~\citep[cf.][]{khemakhem2020variational}.
\end{itemize}
Thus, if $D\! =\! C\! =\! I\! =\! E\! =\! 1$ does not hold exactly, which score deviates the most from $1$ may provide valuable insight into the type of identifiability violation.

\looseness-1 \textbf{Probe classes.} As emphasized above, 
whether or not a representation $\brepr$ is explicit or easy-to-use for predicting factor $z_j$ depends on the class of probe~$\probefnset$ used, e.g., MLPs or RFs. More generally, the explicitness of a representation depends on the way in which it is used in downstream applications, with different downstream uses or probe classes resulting in different definitions of explicit or easy-to-use information. We thus conduct experiments with different probe classes in~\cref{sec:exps}.

\vspace{-0.5em}
\subsection{Size (S)}\label{sec:extensions:size}
We next introduce a measure of representation size~(S), motivated by the observation that larger representations tend to be both more informative and more explicit (see \cref{tab:AE_results}, more details below). Reporting S thus allows  size-informativeness and size-explicitness trade-offs to be analysed.

\textbf{A measure of size.} We measure representation \textit{size}~(S) relative to the ground-truth as:
\begin{align*}
    S = \frac{K}{L} = \frac{\text{dim}(\bz)}{\text{dim}(\brepr)}.
\end{align*}
\looseness-1
When $L\! \geq\! K$, as often the case, we have $S\! \in\! (0,1]$ with the perfect score being $S\!=\!1$. However, if we also consider the $L < K$ case, which would likely sacrifice some informativeness, we have $S\! \in\! (1,K]$. 

\textbf{Larger representations are often more \textit{informative}.} \looseness-1
When $L\! <\! K$, it is intuitive that larger representations are more informative---they can simply preserve more information about $\bz$. When $L\! >\! K$, however, it is also common for larger representations to be more informative, perhaps due to an easier optimization landscape~\citep{frankle2019lottery, golubeva2021are}. \cref{tab:AE_results} illustrates this point, where AE-5 denotes an autoencoder with $L\! =\! 5$. Note that $K\! =\! 7$ for \texttt{MPI3D-Real} (see \cref{sec:exps}).

\textbf{Larger representations are often more \textit{explicit}.}
\looseness-1
The explicitness of a representation also depends on its size: larger representations tend to be more explicit, as is apparent from the second column of~\cref{tab:AE_results}. To explain this, we plot the corresponding loss-capacity curves in \cref{fig:AE_curves}. Here we see that the increased explicitness (i.e., smaller AULLC) of larger representations stems from a substantially lower initial loss when using a linear-capacity MLP probe. The fact that larger representations perform better with linear-capacity MLPs is unsurprising since they have more parameters.

\begin{figure}[tb]
\begin{minipage}[b]{0.475\textwidth}
    \captionsetup{type=table}
    \centering
    \resizebox{0.90\linewidth}{!}{
    \begin{tabular}{c|ccc}
    \toprule
        \textbf{Representation} & \textbf{I} & \textbf{E} & \textbf{S} \\
        \midrule
        AE-$5$ & $0.75$ &  $0.74$ & 1.4 \\ 
        AE-$7$ & $0.92$ & $0.71$ & 1.0 \\ 
        AE-$10$ & $0.99$ & $0.72$ & 0.7 \\ 
        AE-$100$ & $1.0$ & $0.90$ & 0.07 \\ 
        AE-$500$ & $1.0$ & $0.93$ & 0.01 \\
        \bottomrule
    \end{tabular}
    }
    \caption{\small I, E and S scores for auto-encoders of various sizes on \texttt{MPI3D-Real} with MLP probes.}
    \label{tab:AE_results}
\end{minipage}
\hfill
\begin{minipage}[b]{0.475\textwidth}
      \centering
      \includegraphics[width=0.8\linewidth]{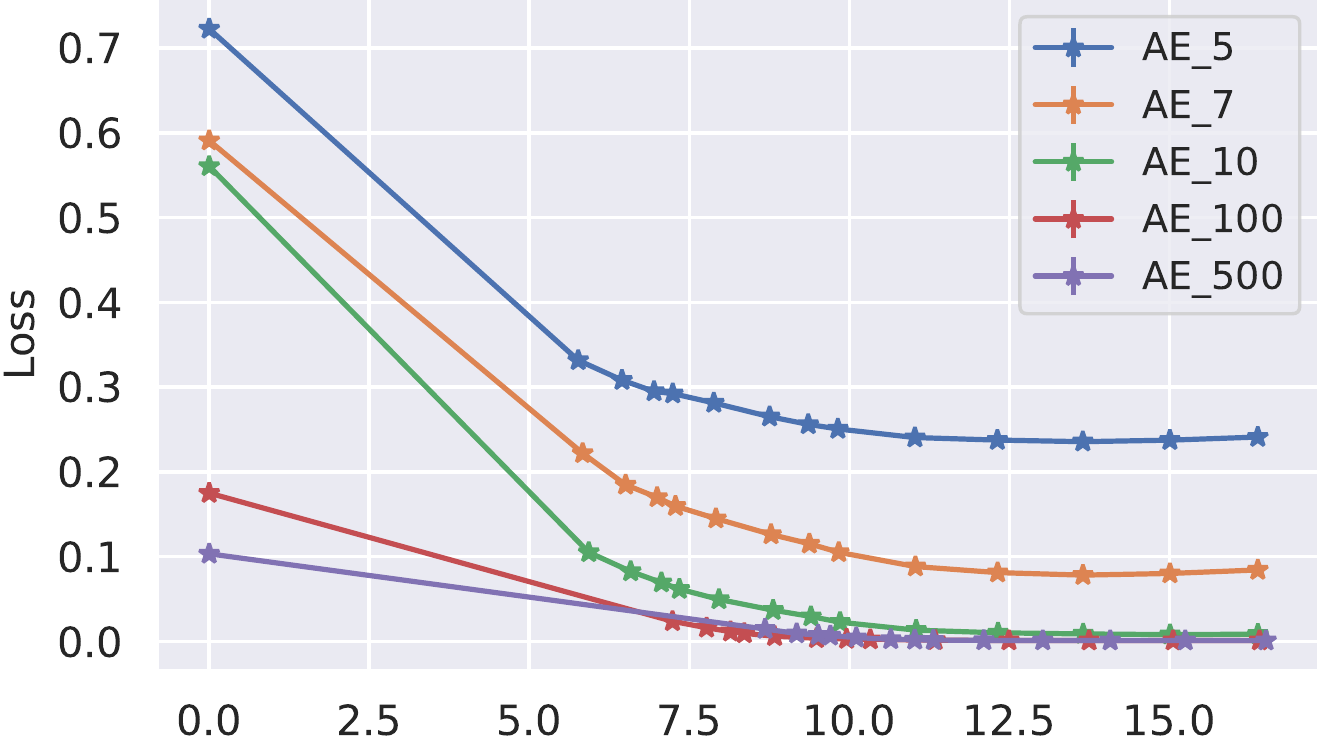}\vspace{-2mm}
    \caption{\small \looseness-1 Loss-capacity curves for auto-encoders of various sizes on \texttt{MPI3D-Real} with MLP probes.}
    \label{fig:AE_curves}
\end{minipage}
\end{figure}

\subsection{Probe-agnostic feature importances}\label{sec:extensions:feature-importances}
Finally, to meaningfully discuss more flexible probe-function choices within the DCI-ES framework, we point out that the D and C scores can be computed for arbitrary black-box probes $f$
by using
\emph{probe-agnostic} feature-importance measures.
In particular, in our experiments~(\cref{sec:exps}), we use SAGE~\citep{covert2020understanding}
which summarises each feature's importance based on its contribution to predictive performance,
making use of Shapley values~\citep{shapley1953value} to account for complex feature interactions. Such probe-agnostic measures allow the $D$ and $C$ scores to be computed for probes with no inherent or built-in notion of feature importance (e.g., MLPs), thereby generalising the Lasso and RF examples of~\citet[\S~4.3]{eastwood2018framework}. While SAGE has several practical advantages over other probe-agnostic methods~\citep[see, e.g.,][Table 1]{covert2020understanding}, it may not satisfy the conditions required to link the $D$ and $C$ scores to different identifiability equivalence classes (see \cref{remark:sage_importances}). Future work may explore alternative methods which do, e.g., by looking at a feature's mean \emph{absolute} attribution value~\citep{lundberg2017unified} \rebuttal{since, intuitively, absolute contributions do not allow for a cancellation of positive and negative attribution on average}~(cf.~\cref{remark:sage_importances}).

\section{Related work}\label{sec:related}
\textbf{Explicit representations.}
\looseness-1 \Citet[\S~2]{eastwood2018framework} noted that the informativeness score with a linear probe quantifies the amount of information in $\brepr$ about $\bz$ that is ``explicitly represented'', while \citet[\S~3]{ridgeway2018learning} proposed a measure of ``explicitness'' which simply reports the informativeness score with a linear probe. In contrast, our DCI-ES framework differentiates between the amount of information in $\brepr$ about $\bz$ (\emph{informativeness}) and the ease-of-use of this information (\emph{explicitness}). This allows a more fine-grained analysis of the relationship between $\brepr$ and $\bz$, both theoretically (distinguishing between more identifiability equivalence classes;~\cref{sec:theory}) and empirically (\cref{sec:exps}). 

\textbf{Loss-capacity curves.} Plotting loss against model complexity or capacity has long been used in statistical learning theory, e.g., for studying the bias-variance trade-off~\citep[Fig.~7.1]{hastie2009elements}. More recently, such loss-capacity curves have been used to study the double-descent phenomenon of neural networks~\citep{belkin2019reconciling, nakkiran2021deep} as well as the scaling laws of large language models~\citep{kaplan2020scaling}. However, 
they have yet to be used for assessing the quality or explicitness of representations.

\textbf{Loss-data curves.} \Citet{whitney2020evaluating} use loss-data curves, which plot loss against dataset size, to assess representations.
They measure the quality of a representation by the \emph{sample complexity} of learning probes that achieve low loss on a task of interest. Loss-data curves are also studied under the term \textit{learning curves} in standard/purely supervised-learning
settings~\citep[see, e.g.,][for a recent review]{viering2021shape}.
In contrast, we focus on \emph{functional complexity} and the task of predicting the data-generative factors~$\bz$, and then discuss the functional complexity for other tasks $\by$ in~\cref{sec:discussion}.

\section{Experiments}\label{sec:exps}

\subsection{Setup}\label{sec:exps:setup}

\textbf{Data.} We
perform our analysis 
of loss-capacity curves on the
\texttt{MPI3D-Real}~\citep{gondal2019transfer_mpi3d} and \texttt{Cars3D}~\citep{reed2015deep_cars3d} datasets. 
\texttt{MPI3D-Real} contains
$\approx1$M real-world images of a robotic arm holding different objects with seven annotated ground-truth factors: object colour~($6$), object shape~($6$), object size~($2$), camera height~($3$), background colour~($3$) and two degrees of rotations of the arm~($40\times40$); numbers in brackets indicate the number of possible values for each factor. \texttt{Cars3D} contains $\approx17.5k$ rendered images of cars with three annotated ground-truth factors: camera elevation (4), azimuth (24) and car type (183).

\textbf{Representations.} %
We use the following synthetic baselines and standard models as representations:
\begin{itemize}[itemsep=0em]%
    \item \textit{Noisy labels:} $\bc = \bz + \bm{\epsilon}$, with $\bm{\epsilon}\sim \mathcal{N}(\bm{0},0.01\cdot\bm{I}_K)$. 
    \item \textit{Linearly-mixed labels:} $\bc = W \bz$, with $W_{ij}= \frac{1}{LK} + \epsilon_{ij}$ and $\epsilon_{ij} {\sim}\mathcal{N}(0,0.001)$ to achieve ``uniform mixing'' (each $z_j$ evenly-distributed across the $c_i$s) while also ensuring the invertibility of~$W$~a.s.
    \item \textit{Raw data (pixels):} $\bc = \bx = g(\bz)$.
    \item \textit{Others:} We also use VAEs~\citep{kingma2013auto} with $10$ latents~($L{=}10$), $\beta$-VAEs~(\citealt{higgins2017beta}, $L{=}10$); and an ImageNet-pretrained ResNet18~(\citealt{he2016deep}, $L{=}512$). 
\end{itemize}
\textbf{Probes.} %
We use
MLPs, RFs and Random Fourier Features~(RFFs, \citealt{rahimi2007random}) 
to predict~$\bz$ from~$\bc$, with RFFs having a linear classifier on top.  
For MLPs, we start with linear probes (no hidden layers) then increase capacity by adding two hidden layers and varying their widths from $2 \times K$ to $512 \times K$. We then measure capacity based on the number of ``extra'' parameters beyond that of the linear probe, and compute feature importances 
using SAGE
with permutation-sampling estimators and marginal sampling of masked values~(see~\href{https://github.com/iancovert/sage}{https://github.com/iancovert/sage}).
\looseness-1 For RFs, we use ensembles of $100$ trees, control capacity by varying the maximum depth between $1$ and $32$, and compute feature importances using Gini importance.
For RFFs, we control capacity by exponentially increasing the number of random features from $2^4$ to $2^{17}$
, and compute feature importances using SAGE.

\textbf{Implementation details.} 
We split the data into training, validation and test sets of size $295$k, $16$k, and $726$k respectively for \texttt{MPI3D-Real} and $12.6$k, $1.4$k, $3.4$k for \texttt{Cars3d}.
We use the validation split for hyperparameter selection and report results on the test split.
We train MLP probes using the Adam~\citep{kingma2015adam} optimizer for $100$ epochs. %
We use mean-square error and cross-entropy losses 
for continuous and discrete factors~$z_j$, respectively. To compute $E_j$, we use the baseline losses of $\bbE[z_j]$ and a random classifier for continuous and discrete~$z_j$,~respectively. Further details can be found in our open-source code: \href{https://github.com/andreinicolicioiu/DCI-ES}{https://github.com/andreinicolicioiu/DCI-ES}.

\subsection{Evaluation results: curves and scores}\label{results:evaluation}
\textbf{Loss-capacity curves.}
\looseness-1
\Cref{fig:capacity_curves_fig1} depicts loss-capacity curves for the three probes and two datasets,
averaged over factors $z_j$. In all six plots, the noisy-labels baseline performs well with low-capacity and then is surpassed by other representations given sufficient capacity, as expected.
Note that the linearly-mixed-labels baseline immediately achieves $\approx 0$ loss with MLP probes but not with RFF or RF probes, supporting the idea that the explicitness or ease-of-use of a representation depends on the way in which it is used.
Also note that, with MLP probes and $\log (\text{excess}\ \#\text{params})$ as the capacity measure, larger input representations are afforded more parameters with a linear probe and thus are more expressive. This further explains why larger representations are often more explicit, and highlights the difficulty of measuring the capacity of MLPs---an active area of research in its own right, which we discuss in \cref{sec:discussion}. Finally, in \cref{app:dataset_size}, we investigate the effect of dataset size by plotting loss-capacity curves for different dataset sizes, observing that larger datasets have smaller performance gaps between: (i) synthetic and learned representations; and (ii) small and large representations  (see \cref{fig:mpi3d_capacity_curves_few_data}).

\textbf{DCI-ES scores.} \Cref{table:mpi3d_scores} reports the corresponding DCI-ES scores, 
along with some oracle scores for MLPs. Note that: (i)~the GT labels~$\bz$ get perfect scores of $1$ for all metrics; (ii)~by attaining very low D and C scores but near-perfect E scores, the linearly-mixed labels expose the key difference between mixing-based~(D,C) and functional-capacity-based~(E) measures of the \emph{simplicity of the {$\brepr$-$\bz$} relationship};
 (iii)~larger representations (ImgNet-pretr, raw data) tend to be more explicit than smaller ones (VAE, $\beta$-VAE), with $S$ and $E$ together capturing this size-explicitness trade-off; and (iv) $\beta$-VAE achieves better mixing-based scores (D,C) but similar E scores compared to the VAE, 
 illustrating that these two ``disentanglement'' notions are indeed orthogonal and complementary. 

\begin{table}[t]
\centering
    \caption{
    \small
    \textbf{DCI-ES scores for different probes, datasets and representations.} Empirical scores using MLP, RFF and RF probes trained on the \texttt{MPI3D-Real} and \texttt{Cars3D} datasets, as well as theoretical/oracle scores for some simple representations with MLPs (MLP*). We show averages over $3$ random seeds; 
    standard deviations
    were all $<0.05$. \rebuttal{Note that which representation is deemed ``best'' depends on the application of interest---some are more disentangled, some more informative, some more explicit, etc.}
    }\label{table:mpi3d_scores}
\resizebox{\textwidth}{!}{
    \begin{tabular}{@{}ll|ccccc|ccccc@{}}
    \toprule
    \multirow{2}{2.5cm}{\textbf{Representation}} & \multirow{2}{1cm}{\textbf{Probe}} &
    \multicolumn{5}{c}{\textsc{MPI3D}} &
    \multicolumn{5}{c}{\textsc{Cars3D}}\\
     &
     &
    \multicolumn{1}{c}{\textbf{D}} & \multicolumn{1}{c}{\textbf{C}} & \multicolumn{1}{c}{\textbf{I}} & \multicolumn{1}{c}{\textbf{E}} & \multicolumn{1}{c}{\textbf{S}} &
    \multicolumn{1}{c}{\textbf{D}} & \multicolumn{1}{c}{\textbf{C}} & \multicolumn{1}{c}{\textbf{I}} & \multicolumn{1}{c}{\textbf{E}} & \multicolumn{1}{c}{\textbf{S}} \\
    \midrule
    GT Labels~$\bz$ & MLP* & $1$ & $1$ & $1$ & $1$ & $1$ 
        & $1$ & $1$ & $1$ & $1$ & $1$ \\
    \midrule
    \multirow{4}{2cm}{Noisy labels} 
    & MLP* & $1$ & $1$ & $0.9$ & $1$ & $1.0$
        & $1$ & $1$ & $0.9$ & $1$ & $1.0$ \\
    & MLP & $0.97$ & $0.97$ & $0.89$ & $0.99$ & $1.0$ 
        & $0.99$ & $0.99$ & $0.92$ & 0.99 & $1.0$ \\
    & RFF & $0.97$ & $0.97$ & $0.88$ & $0.99$ & $1.0$
        & $1.0$ & $1.0$ & $0.91$ & $1.0$ & $1.0$\\
    & RF & $0.93$ & $0.93$ & $0.89$ & $0.98 $ & $1.0$
        & $0.95$ & $0.95$ & $0.92$ & $0.99 $ & $1.0$\\
    \midrule
    \multirow{4}{2cm}{Linearly-mixed labels} 
    & MLP* & $0$ & $0$ & $1$ & $1$ & $1.0$
        & $0$ & $0$ & $1$ & $1$ & $1.0$ \\
    & MLP & $0.13$ & $ 0.22 $ & $1.0$ & 1.0 & $1.0$
        & $0.21$ & $ 0.22 $ & $1.0$ & 1.0 & $1.0$\\
    & RFF & $0.11$ & $0.21$ & $1.0$ & $0.94$ & $1.0$
        & $0.19$ & $0.19$ & $1.0$ & $1.0$ & $1.0$\\
    & RF & $0.17$ & $0.21$ & $1.0$ & $0.72$ & $1.0$
        & $0.08$ & $0.12$ & $0.99$ & $0.78$ & $1.0$\\
    \midrule
    \multirow{3}{1cm}{VAE}  
    & MLP & $0.15$ & $0.14$ & $0.99$ & $0.71$ & $0.7$
        & $0.18$ & $0.11$ & $0.95$ & $0.80$ & $0.3$\\
    & RFF & $0.13 $ & $0.14$ & $0.97$ & $0.69$ & $0.7$
        & $0.16$ & $0.11$ & $0.91$ & $0.87$ & $0.3$\\
    & RF & $0.10$ & $0.10$ & $0.93$ & $0.65$ & $0.7$
       & $0.14$ & $0.09$ & $0.80$ & $0.81$ & $0.3$\\
     \midrule
    \multirow{3}{1cm}{$\beta$-VAE}  
    & MLP & $0.46$ & $0.41$ & $0.96$ & $0.74$ & $0.7$
        & $0.27$ & $0.23$ & $0.94$ & $ 0.78$ & $0.3$\\
    & RFF & $0.41$ & $0.38$ & $0.92$ & $0.71$ & $0.7$
        & $0.31$ & $0.23$ & $0.86$ & $0.86$ & $0.3$\\
    & RF & $0.39$ & $0.35$ & $0.94$ & $0.76$ & $0.7$
        & $0.20$ & $0.17$ & $0.84$ & $0.83$ & $0.3$\\
    \midrule
    \multirow{3}{2cm}{ImgNet-pretr} 
    & MLP & $0.16$ & $0.10$ & $0.99$ & $0.82$ & $0.01$
        & $0.22$ & $0.07$ & $0.88$ & $0.86$ & $0.006$\\
    & RFF & $0.15$ & $0.13$ & $0.96$ & $0.58$ & $0.01$
        & $ 0.24$ & $0.10$ & $0.83$ & $0.65$ & $0.006$ \\
    & RF & $0.35$ & $0.20$ & $0.89$ & $0.78$ & $0.01$
        & $0.20$ & $0.09$ & $0.62$ & $0.83$ & $0.006$ \\
    \midrule
    \multirow{3}{2cm}{Raw data} 
    & MLP & $0.22$ & $0.16$ & $0.99$ & $0.82$ & $0.001$
        & $0.39$ & $0.27$ & $0.95$ & $0.84$ & $0.0002$\\
    & RFF & $0.37$ & $0.14$ & $0.97$ & $0.44$ & $0.001$
        & $0.32$ & $0.24$ & $0.87$ & $0.64$ & $0.0002$\\
    & RF & $0.84$ & $0.41$ & $0.96$ & $0.80$ & $0.001$
        & $0.53$ & $0.31$ & $0.86$ & $0.82$ & $0.0002$\\
     \bottomrule
    \end{tabular}
}
\end{table}

\subsection{\rebuttal{Downstream results: score correlations}}\label{results:downstream}
\textbf{Setup.} \looseness-1 \rebuttal{To illustrate the practical usefulness of our explictness score, we calculate its correlation with downstream performance when using low-capacity probes. Using \texttt{MPI-3D}, we create 14 synthetic downstream tasks: 7 regression tasks with $y^i = M^i \bfz$ and $M^i_{jk} \sim U(0,1)$, and 7 classification tasks with $y^i = \mathbb{1}_{\{z_i > m_i\}}$ and $m_i$ the median value of factor $z_i$.} \rebuttal{For representations, we use AEs, VAEs and $\beta$-VAEs, 2 latent dimensionalities} (i.e. $\text{dim}(\bfc)$) of 10 and 50, \rebuttal{and 5 random seeds---resulting in a total of 30 different representations $\bfc$. To compute the correlations, we first compute the DCIE scores as before, training MLP and RF probes $f$ to predict $\bfz$ from $\bfc$,} i.e.\ $\hat{z}_j = f_j(\bfc)$, \rebuttal{and then compute the downstream performance by training new low-capacity MLP and RF probes $\tilde{f}$ to predict $y$ from $\bfc$,} i.e.\ $\hat{y}^i = \tilde{f}_i(\bfc)$ (see \cref{fig:downstreamCorr:c}). \rebuttal{For MLP probes, low capacity means linear. For RF probes, low capacity means the maximum tree depth is 10. Next, we average the downstream performances across all 14 tasks before computing the correlation coefficient between this average and each of the D, C, I, and E scores.}

\textbf{Analysis.} \Cref{fig:downstreamCorr:a,fig:downstreamCorr:b} \rebuttal{show that
E is strongly correlated with downstream performance when using both MLP ($\rho\!=\!0.96, p\!=\!\text{8e-18}$) and RF probes ($\rho\!=\!0.88, p\!=\!\text{2e-10}$). In contrast, mixing-based disentanglement scores (D, C) exhibit much weaker correlations with MLP probes, corroborating the results of} \citet[Fig.\ 8]{trauble2022the} \rebuttal{who also found a weak correlation between D and downstream performance on reinforcement learning tasks with MLPs. See} App.~\ref{app:downstream} \rebuttal{for further details and results.}

\begin{figure}[tb]
\centering
\begin{subfigure}[b]{.5\textwidth}
  \centering
  \includegraphics[width=1.0\linewidth]{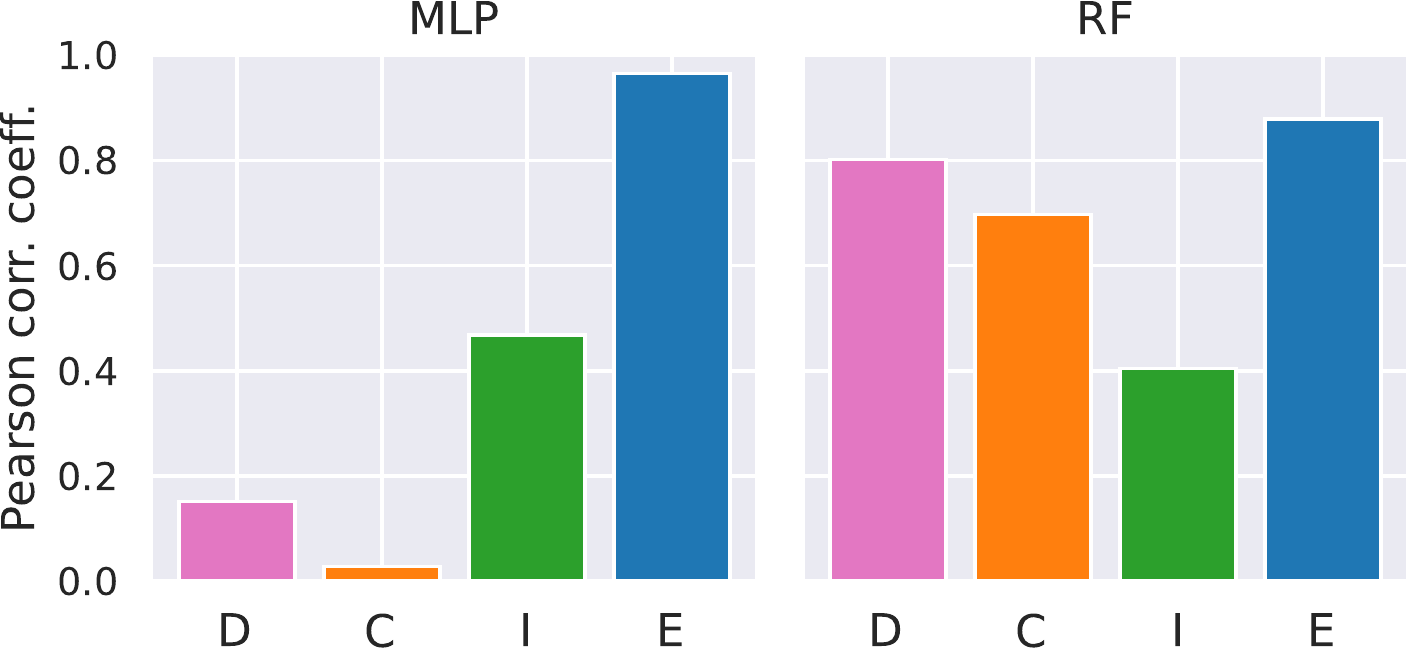}\vspace{-1.5mm}
  \caption{}\label{fig:downstreamCorr:a}
\end{subfigure}\hfill
\begin{subfigure}[b]{.235\textwidth}
  \centering
  \includegraphics[width=1.0\linewidth]{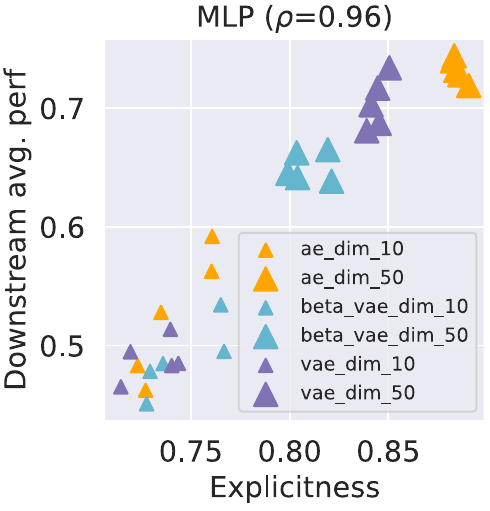}\vspace{-1.5mm}
  \caption{}\label{fig:downstreamCorr:b}
\end{subfigure}\hfill
\begin{subfigure}[b]{.175\textwidth}
  \centering
  \includegraphics[width=1.0\linewidth]{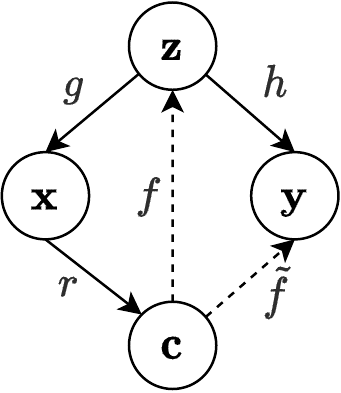}\vspace{3mm}
  \caption{}\label{fig:downstreamCorr:c}
\end{subfigure}
\caption{\small \looseness-1 \rebuttal{(a) Correlation coefficients~$\rho$ between DCIE scores and downstream performance with low-capacity probes. (b) E} vs.\ \rebuttal{downstream performance with linear MLPs. (c) DCIE scores are computed by predicting $\bfz$ from $\bfc$ with probes $f$, then downstream tasks $y = h(\bfz)$ are solved by predicting $y$ from $\bfc$ with low-capacity probes $\tilde{f}$.}}
\label{fig:downstreamCorr}
\end{figure}

\begin{figure}[tb]
\vspace{-2mm}
\centering
\begin{subfigure}[b]{.326\textwidth}
  \centering
  \includegraphics[width=1.0\linewidth]{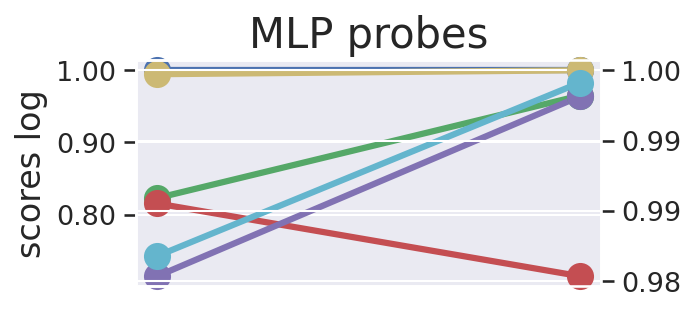}
\end{subfigure}
\begin{subfigure}[b]{.31\textwidth}
  \centering
  \includegraphics[width=1.0\linewidth]{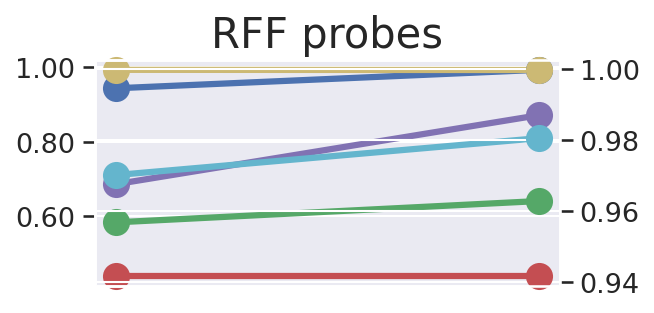}
\end{subfigure}
\begin{subfigure}[b]{.326\textwidth}
  \centering
  \includegraphics[width=1.0\linewidth]{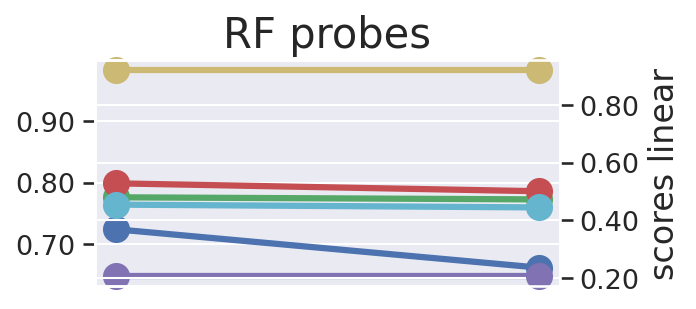}
\end{subfigure}
\hspace{10mm}
\begin{subfigure}[b]{.326\textwidth}
  \centering
  \includegraphics[width=1.0\linewidth]{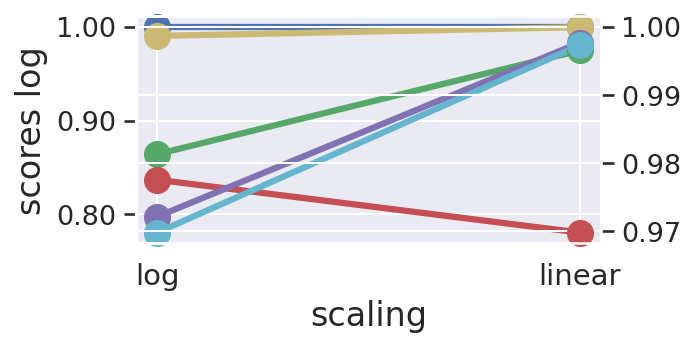}
\end{subfigure}
\begin{subfigure}[b]{.31\textwidth}
  \centering
  \includegraphics[width=1.0\linewidth]{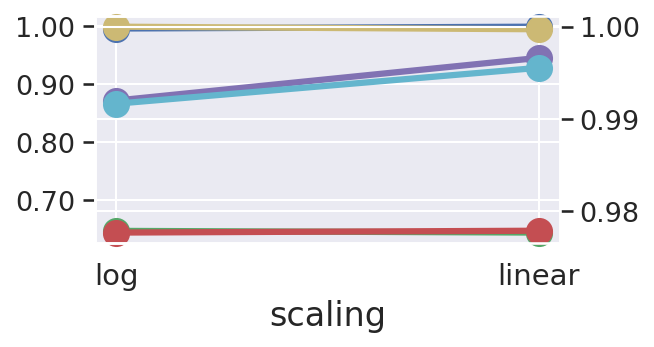}
\end{subfigure}
\begin{subfigure}[b]{.326\textwidth}
  \centering
  \includegraphics[width=1.0\linewidth]{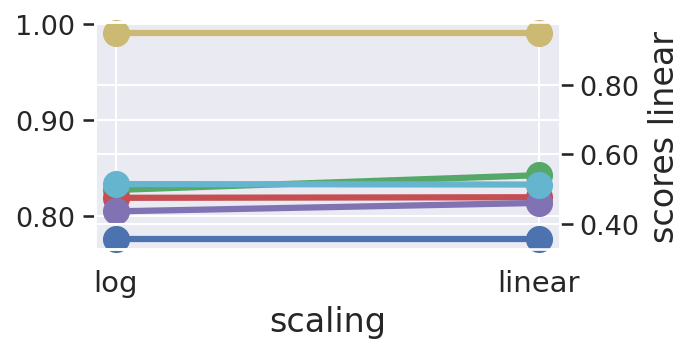}
\end{subfigure}
\centering
  \includegraphics[width=0.85\linewidth]{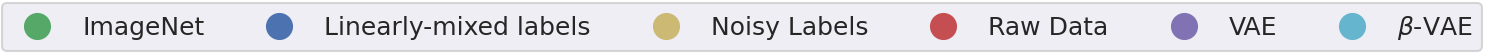}
\caption{\small Explicitness scores on \texttt{MPI3D-Real} (top) and \texttt{Cars3D} (bottom) for different representations (see legend). Each pair of points represents the score with logarithmic (left) and linear (right) capacity-scaling.
}
\label{fig:exp_scaling}
\end{figure}

\section{Discussion}\label{sec:discussion}\vspace{-0.5mm}
\textbf{Why connect disentanglement and identifiability?} \looseness-1 Connecting prediction-based evaluation in the disentanglement literature to the more theoretical notion of identifiability has several benefits. Firstly, it provides a concrete link between two often-separate communities. Secondly, it endows the often empirically-driven or practice-focused disentanglement metrics with a solid and well-studied theoretical foundation. Thirdly, compared to the commonly-used MCC or Amari distance, it provides the ICA or identifiability community with more fine-grained empirical measures, as discussed in~\cref{sec:extensions:simplicity}.

\textbf{Measuring probe capacity.}
Our measure of explicitness $E$ depends strongly on the choice of capacity measure for a probe or function class. For some probes like RFs or RFFs, there exist natural measures of capacity. However, for other probes like MLPs, coming up with a good capacity measure is itself an important and active area of research~\citep{jiang2020fantastic, dziugaite2020search}.
Another difficulty arises from choosing a capacity \emph{scale}, with different scales (e.g., log, linear, etc.) leading to loss-capacity curves with different shapes, areas and thus explicitness scores. To investigate the extent of this issue, i.e., the sensitivity of our explicitness measure to the choice of capacity scale, \cref{fig:exp_scaling} compares the explicitness scores when using logarithmic and linear scaling. Here we see that the \emph{ranking} essentially remains the same except for the raw-data representation with MLP probes.

\textbf{Measuring feature importance.}
Similarly, the choice of feature-importance measure has a strong influence on the $D$ and $C$ scores, with some probes having natural or in-built measures (e.g., random forests) and others not (e.g., MLPs). For the latter, we proposed the use of probe-agnostic feature-importance measures like SAGE,
and specified the conditions (\cref{cor:id_gen}) that importance measures must satisfy if the resulting $D$ and $C$ scores are to be connected to identifiability.
As with probe capacity, coming up with good measures of feature importance is its own orthogonal field of study (e.g., model explainability), with future advances likely to improve the DCI-ES framework. 

\textbf{What about explicitness for other tasks $\by$?} \looseness-1 While we focused on the explicitness or ease-of-use of a representation for predicting the data-generative factors $\bz$,
one may also be interested in its ease-of-use
for other tasks/labels~$\by$. While it is often implicitly assumed that the ease-of-use for predicting $\bz$ correlates with the ease-of-use for common tasks of interest (e.g., object classification, segmentation, etc.), future work could directly evaluate the explicitness of a representation for particular tasks $\by$. 
For example, one could consider the entire loss-capacity curve when benchmarking self-supervised representations on ImageNet, rather than just linear-probe performance (a single slice). 
Future work could also explore the trade-off between \textit{explicit but task-specific}  
and \textit{implicit but task-agnostic} representations.

\section{Conclusion}\vspace{-0.5mm}
We have presented DCI-ES---an extended disentanglement framework with two new complementary measures of representation quality---and proven its connections to identifiability.
\looseness-1 In particular, we have advocated for additionally measuring the explicitness~(E) of a representation by the functional capacity required to use it, and proposed to quantify this explicitness using a representation's loss-capacity curve. Together with the 
size~(S) of a representation, we believe that our extended DCI-ES framework allows for a more fine-grained and nuanced benchmarking of representation quality.

\subsubsection*{Acknowledgments}
The authors would like to thank Chris Williams, Francesco Locatello, Nasim Rahaman, Sidak Singh and Yash Sharma for helpful discussions and comments. This work was supported by the German Federal Ministry of Education and Research (BMBF): T\"ubingen AI Center, FKZ: 01IS18039A, 01IS18039B; and by the Machine Learning Cluster of Excellence, EXC number 2064/1 – Project number 390727645.

\bibliography{iclr2023_conference}
\bibliographystyle{iclr2023_conference}

\clearpage
\appendix
\section{Proofs}
\label{app:proofs}
\subsection{Proof of Proposition~\ref{prop:R_permutation_matrix}}
\Rpermutation*
\begin{proof}
\looseness-1
First, by~%
\cref{def:relative_importance_matrix},
we have $0\leq R_{ij}$ and $\sum_{i=1}^L R_{ij}=1$ $\forall i,j$, so
$0\leq R_{ij}\leq 1$.
It follows that $\forall i,j:\, P_{i\cdot},\tilde{P}_{\cdot j}\in\Delta_{K-1}$, where $\Delta_{K-1}$ denotes the $K$-dim.\ probability simplex, 
i.e., $P_{i\cdot}$ and $\tilde{P}_{\cdot j}$ are valid probability vectors.
Hence, the Shannon entropies $H_K(P_{i\cdot}),H_K(\tilde{P}_{\cdot j})$ are well-defined $\forall i,j$, and, due to using $\log_K$
in the definition of~$H_K$~(see~\cref{sec:background}), are bounded in~$[0,1]$.
It follows that $\forall i,j: \,0\leq D_i \leq 1$ and $0\leq C_j\leq 1$.
Since $D$ and $C$ are convex combinations of the $D_i$ and $C_j$, we have
\begin{align*}
    D&=1 \iff \forall i:\, D_i=1  \iff \forall i:\, H_K(P_{i\cdot})=0\,,\\
    C&=1 \iff \forall j:\, C_j=1  \iff \forall j:\,H_K(\Tilde{P}_{\cdot j})=0\,.
\end{align*}
Now for any $\bm{p}=(p_1, ..., p_K)\in\Delta_{K-1}$, we have that 
\begin{equation*}
\begin{aligned}
    &H_K(\bm{p})=-\textstyle\sum_{k=1}^K p_k\log_K p_k=0
    \iff \forall k:\, p_k\log_K p_k=0     \iff \forall k:\, p_k\in\{0,1\}
\end{aligned}
\end{equation*}
where $p_k\log p_k:=0$ for $p_k=0$, consistent with $\lim_{x\to 0^+}x\log x=0$.
Together with the simplex constraint, this implies that $\bm{p}$ must be a standard basis vector $\bm{p}=\bm{e}_l$ for some $l$, i.e., $p_l=1$ and $p_k=0$ for $k\neq l$.
Hence, $P_{i\cdot}, \tilde{P}_{\cdot j}$ must be standard basis vectors for all $i,j$, and so each row and column of~$\bR$ contains exactly one non-zero element. Since columns of $\bR$ sum to one, these non-zero elements must all be one. 
\end{proof}
\subsection{Proof of~\cref{cor:linear}}
\linearcor*
\begin{proof}
\rebuttal{First, we show that $R_{ij}=\frac{|w_{ij}|}{\sum_{i=1}^L |w_{ij}|}$ is a well-defined feature importance matrix.
Suppose for a contradiction, that $\sum_{i=1}^L |w_{il}|=0$ for some $l$. Since $|w_{il}|\geq 0$, this implies $w_{il}=0$ for all $i$. Consider $z_l=\sum_{i=1}^L w_{il}c_i$. Taking the covariance, we obtain $\operatorname{Var}[z_l]=\sum_{i,j=1}^L w_{il}w_{jl}\operatorname{Cov}(c_i,c_j)=0$, which is a contradiction since $z_l$ has positive (unit) variance by the normalisation assumption (see footnote 1). Hence, $\sum_{i=1}^L |w_{il}|>0$ for all $l$. Thus $\bR$ is well-defined, with its elements being non-negative and its columns summing to one by construction, so it is a valid feature importance matrix.} 

\rebuttal{Next, note that we can write $\bR=|\bW|\bD$ where $\bD$ is the invertible diagonal matrix with positive diagonal entries $D_{jj}=\frac{1}{\sum_{i=1}^L |w_{ij}|}>0$.}

By~\cref{prop:R_permutation_matrix}, $\bR$ is a permutation matrix, \rebuttal{so $\bR=\bP=|\bW|\bD$ for some permutation matrix $\bP$. Right multiplication by $\bD^{-1}$ yields $\bP \bD^{-1}=|\bW|$, that is $|\bW|$ has exactly one non-zero, positive element in each row and each column (and zeros elsewhere). Thus $\bW$ and therefore also $\bW^\top$ are generalised permutation matrices. Hence $(\bW^\top)^{-1}$ exists and is also a generalised permutation matrix.} 

\rebuttal{Finally, consider $\bc=(\bW^\top)^{-1}\bz$. Since all but ony element in each row of $(\bW^\top)^{-1}$ are zero, we have for any $i$: $c_i=\tilde{w}_{ij}z_j$ for some $j$, where $\tilde{w}_{ij}$ denotes the $(i,j)$ element of $(\bW^\top)^{-1}$. By considering the variances of both sides and recalling that all $c_i$'s and $z_j$'s are normalised to unit variance, it follows that $1=\operatorname{Var}(c_i)=\tilde{w}_{ij}^2\operatorname{Var}(z_j)=\tilde{w}_{ij}^2$. Hence, $\tilde{w}_{ij}^2=\pm 1$ and so $(\bW^\top)^{-1}$ is, in fact, a signed permutation matrix, which concludes the proof.}
\end{proof}

\subsection{Proof of~\cref{cor:id_gen}}
\nonlinearcor*
\begin{proof}
For any $j$ %
consider $z_j=f_j(\bc)$. By~\cref{prop:R_permutation_matrix}, $R$~is a permutation matrix, so column~$j$ of~$\bR$ contains exactly one non-zero entry 
in row~$\pi(j)$ for some permutation~$\pi$ of~$\{1, ..., K\}$.
Hence, by the assumed property of $R$, $f_j(\bc)$ does not depend on $c_i$ for all $i\neq \pi(j)$, and thus $z_j=f_j(c_{\pi(j)})$ $\forall j$.
By invertibility of~$f$, we obtain
$c_{j}=h_j(z_{j'})$ with $h_j=f_{j'}^{-1}$ and $j'=\pi^{-1}(j)$.
\end{proof}

\section{Additional Experimental Results}
\subsection{\rebuttal{Downstream Correlations}}\label{app:downstream}

\rebuttal{Here we present the full results of the correlations between the DCIE scores and downstream performance, the latter with low-capacity probes (as discussed in} \cref{results:downstream}).

In \cref{table:downstream_corr_pval} and \cref{table:downstream_corr_pval_sp} \rebuttal{we show the values of the Pearson and Spearman correlations alongside the corresponding $p$-values}\footnote{Computed using \url{https://docs.scipy.org/doc/scipy/reference/generated/scipy.stats.pearsonr.html}}. \rebuttal{Note that some of the assumptions behind these $p$-values,} e.g.\ \rebuttal{that the DCIE scores and downstream performances are normally distributed, likely do not hold. Thus, these $p$-values should not be interpreted as precise probabilities but rather as rough indications of statistical significance.}
In \cref{table:downstream_reg_cls} \rebuttal{we show the 
correlations for regression and classification tasks separately, with both task types exhibiting similar correlations. We note that E has the strongest correlation with the downstream performance (when using low-capacity probes for the downstream task).}

\begin{table}[h]
\centering
    \caption{
    \small
\rebuttal{
    Pearson correlation coefficient $\rho$ between the D, C, I, and E scores and downstream performance, along with the corresponding $p$-values (in parentheses). See }\cref{results:downstream}\rebuttal{ for experimental details.}
    }\label{table:downstream_corr_pval}
    \begin{tabular}{@{}l|cccc@{}}
    \toprule
    \textbf{Probe $f$} & \textbf{D} & \textbf{C} & \textbf{I} & \textbf{E} 
    \\
    \midrule
    \multirow{1}{2cm}{MLP} 

    & 0.15 (\num{4e-01}) & 0.28 (\num{9e-01}) & 0.47 (\num{9e-03}) & \textbf{0.96} (\num{8e-18}) \\
    \midrule
    \multirow{1}{2cm}{RF} 
    & 0.8 (\num{1e-07}) & 0.70 (\num{2e-05}) & 0.4 (\num{3e-02}) & \textbf{0.88} (\num{2e-10}) \\
    \bottomrule
    \end{tabular}
\end{table}

\begin{table}[h]
\centering
    \caption{
    \small
\rebuttal{
     Spearman rank correlation between the D, C, I, and E scores and downstream performance, along with the corresponding $p$-values (in parentheses).}
    }\label{table:downstream_corr_pval_sp}
    \begin{tabular}{@{}l|cccc@{}}
    \toprule
    \textbf{Probe $f$} & \textbf{D} & \textbf{C} & \textbf{I} & \textbf{E} 
    \\
    \midrule
    \multirow{1}{2cm}{MLP} 
    & 0.12 (\num{5e-01}) & -0.07 (\num{7e-01}) & 0.55 (\num{2e-03}) & \textbf{0.94} (\num{1e-14}) \\
    \midrule
    \multirow{1}{2cm}{RF} 
    & \textbf{0.81} (\num{6e-08}) & 0.75 (\num{2e-06}) & 0.28 (\num{1e-01}) & 0.78 (\num{3e-7}) \\
    \bottomrule
    \end{tabular}
\end{table}

\begin{table}[h]
\centering
    \caption{
    \small
    \rebuttal{Pearson correlation coefficient $\rho$ between the D,C,I, E scores and downstream performance for each task type (regression and classification). Correlations are similar across both task types.}
    }\label{table:downstream_reg_cls}
    \begin{tabular}{@{}ll|cccc@{}}
    \toprule
    \textbf{Probe $f$} & \textbf{Task} & \textbf{D} & \textbf{C} & \textbf{I} & \textbf{E} 
    \\
    \midrule
    \multirow{2}{2cm}{MLP} 

    & Regression & 0.16 & 0.04 & 0.46 & \textbf{0.96} \\
    & Classification  & 0.14 & 0.00 & 0.48 & \textbf{0.96} \\
    \midrule
    \multirow{2}{2cm}{RF} 
    & Regression & 0.76 & 0.66 & 0.42 & \textbf{0.84} \\
    & Classification  & 0.81 & 0.72 & 0.35 & \textbf{0.89}\\
    \bottomrule
    \end{tabular}
\end{table}

\paragraph{\rebuttal{Score-by-score analysis.}} \rebuttal{To get a deeper insight into the correlations reported in}~\cref{table:downstream_corr_pval,table:downstream_corr_pval_sp}, \rebuttal{we plot each of the D, C, I and E scores against downstream performance for each of the 30 models considered in}~\cref{results:downstream}. \rebuttal{As shown in}~\Cref{fig:D_Downstream_corr,fig:C_Downstream_corr,fig:I_Downstream_corr,fig:E_Downstream_corr}, \rebuttal{only E correlates strongly with downstream performance for both probe types, again highlighting: (i) the value that E adds to the existing DCI framework; and (ii) the practical usefulness of reporting E when comparing/evaluating learned representations.}

\begin{figure}[h!]
    \centering
        \includegraphics[width=0.45\linewidth]{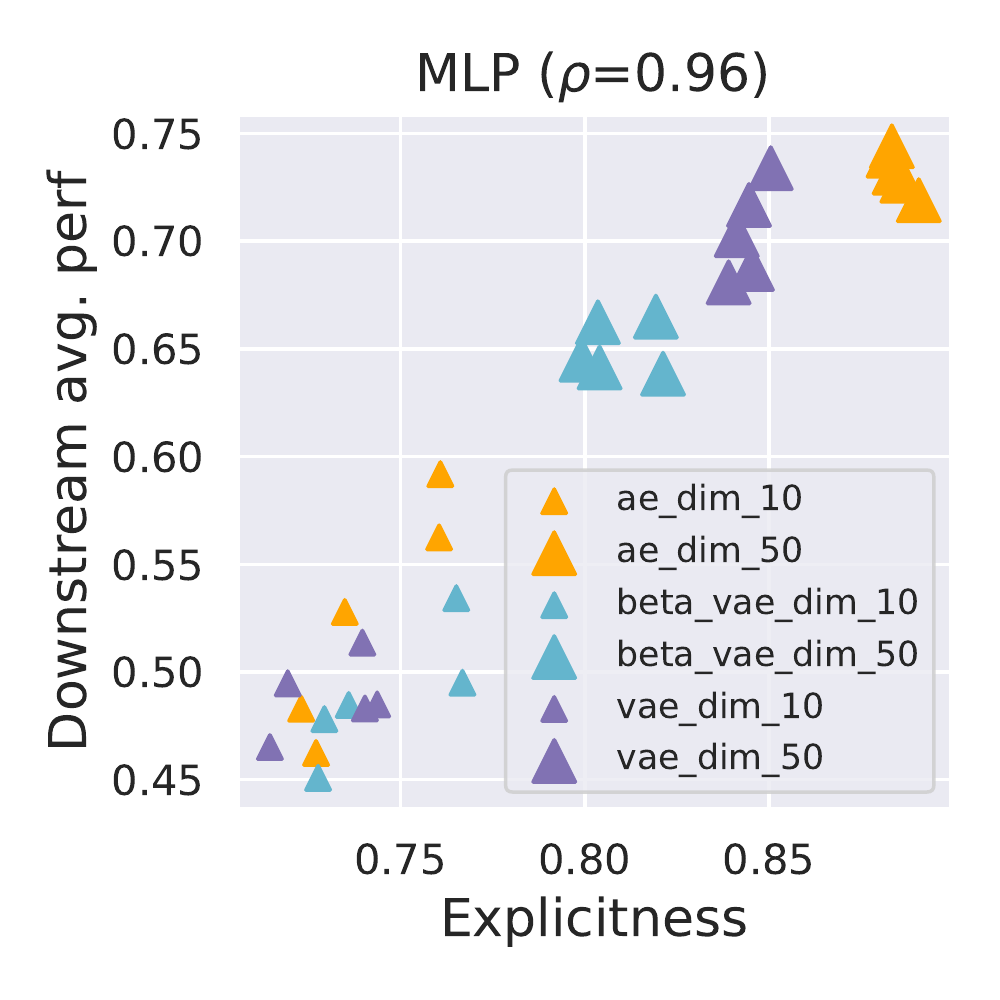}
        \includegraphics[width=0.45\linewidth]{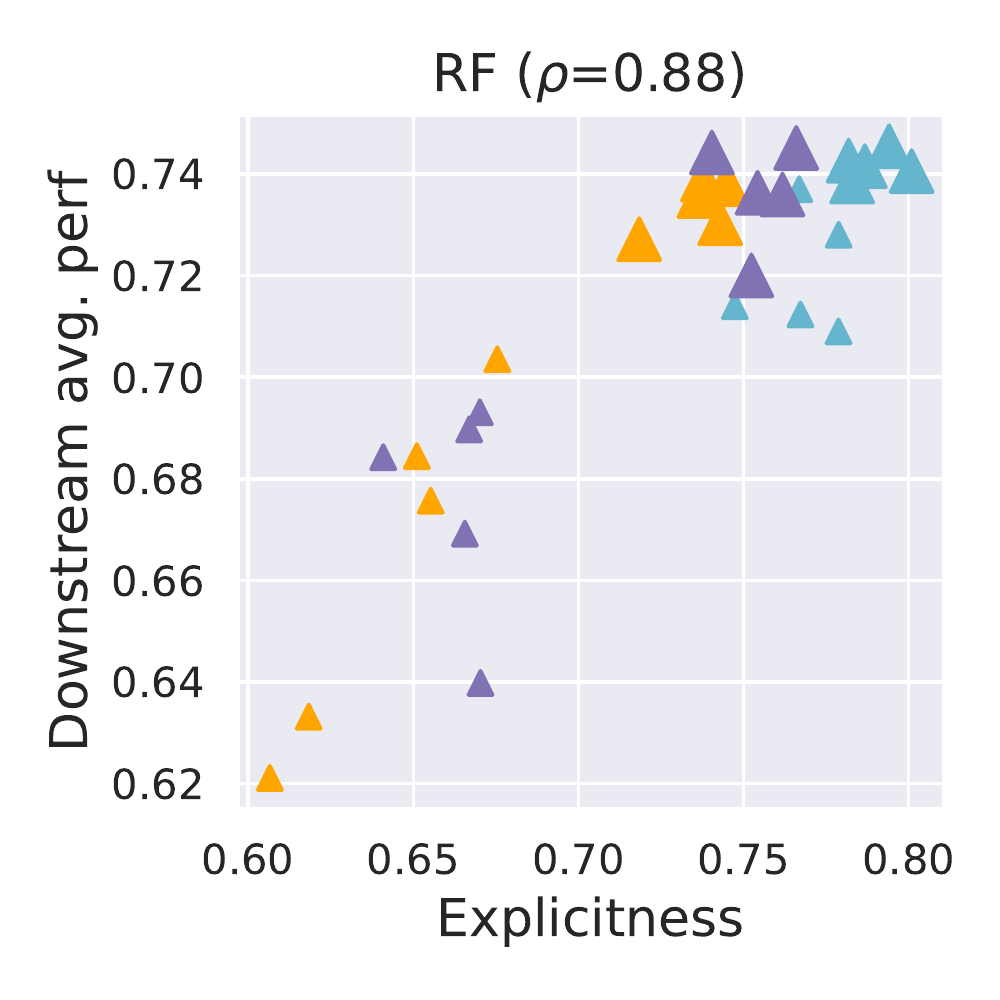}
    \caption{\textbf{\rebuttal{Explicitness (E)} vs.\ \rebuttal{downstream performance.}} Scatter plots show 30 data points: 3 models (AEs, VAEs, $\beta$-VAEs) $\times$ 2 latent dimensionalities ($L=10$ and $L=50$) \rebuttal{$\times$ 5 random seeds.}}
     \label{fig:E_Downstream_corr}
\end{figure}

\begin{figure}[h!]
    \centering
        \includegraphics[width=0.45\linewidth]{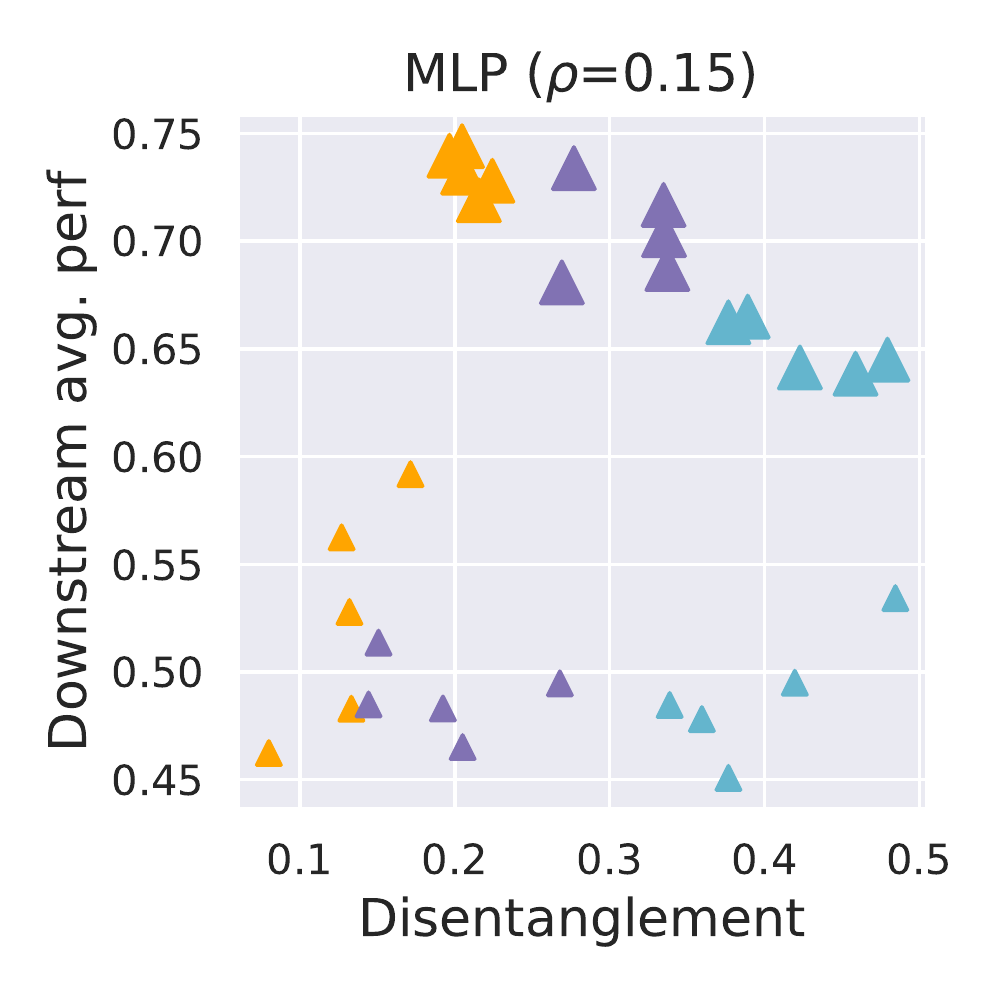}
        \includegraphics[width=0.45\linewidth]{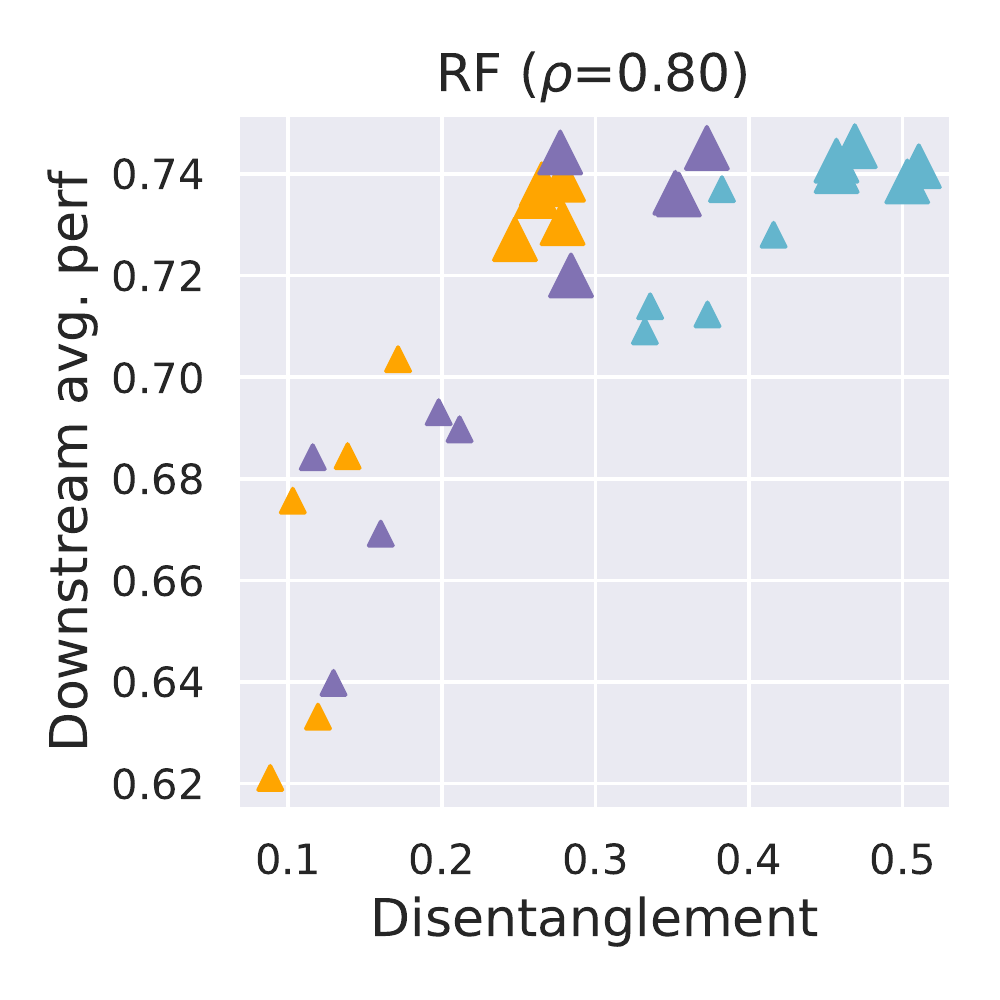}
    \caption{\textbf{\rebuttal{Disentanglement (D)} vs.\ \rebuttal{downstream performance.}} Scatter plots show 30 data points: 3 models (AEs, VAEs, $\beta$-VAEs) $\times$ 2 latent dimensionalities ($L=10$ and $L=50$) \rebuttal{$\times$ 5 random seeds.}}
    \label{fig:D_Downstream_corr}
\end{figure}

\begin{figure}[h!]
    \centering
        \includegraphics[width=0.45\linewidth]{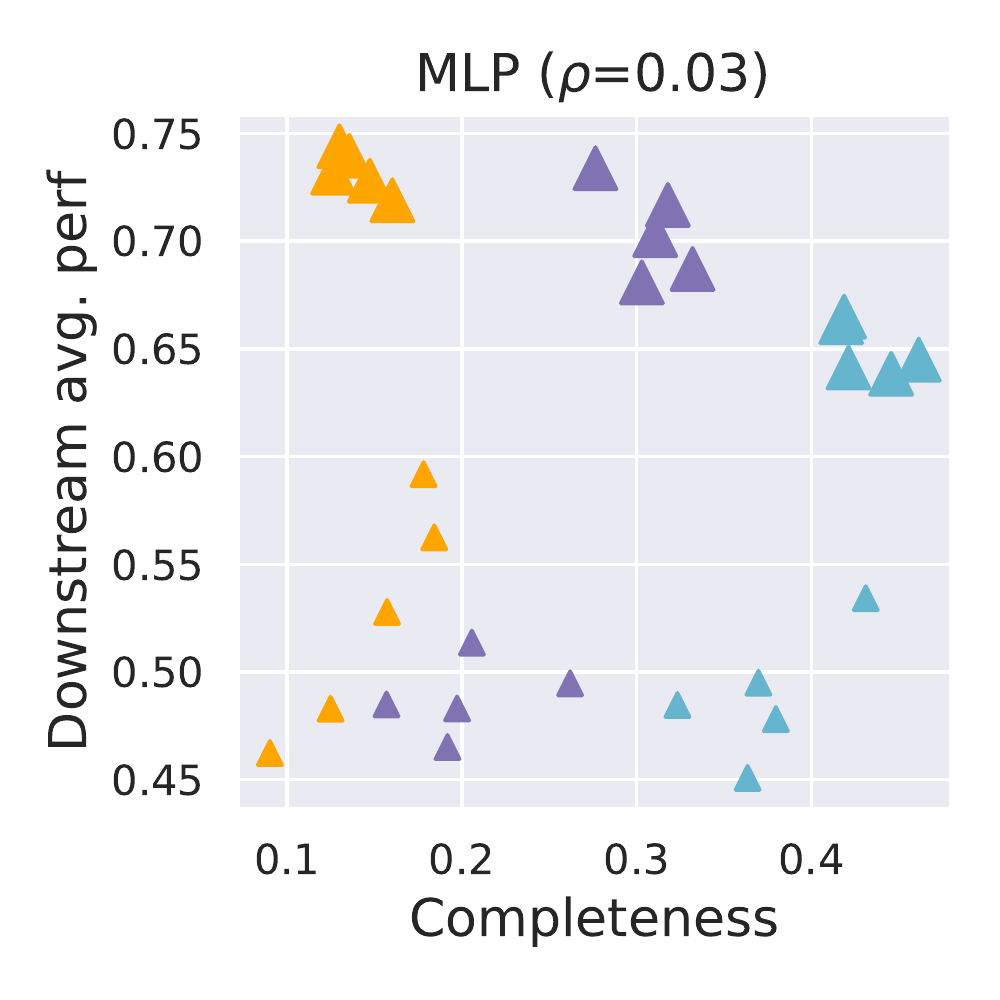}
        \includegraphics[width=0.45\linewidth]{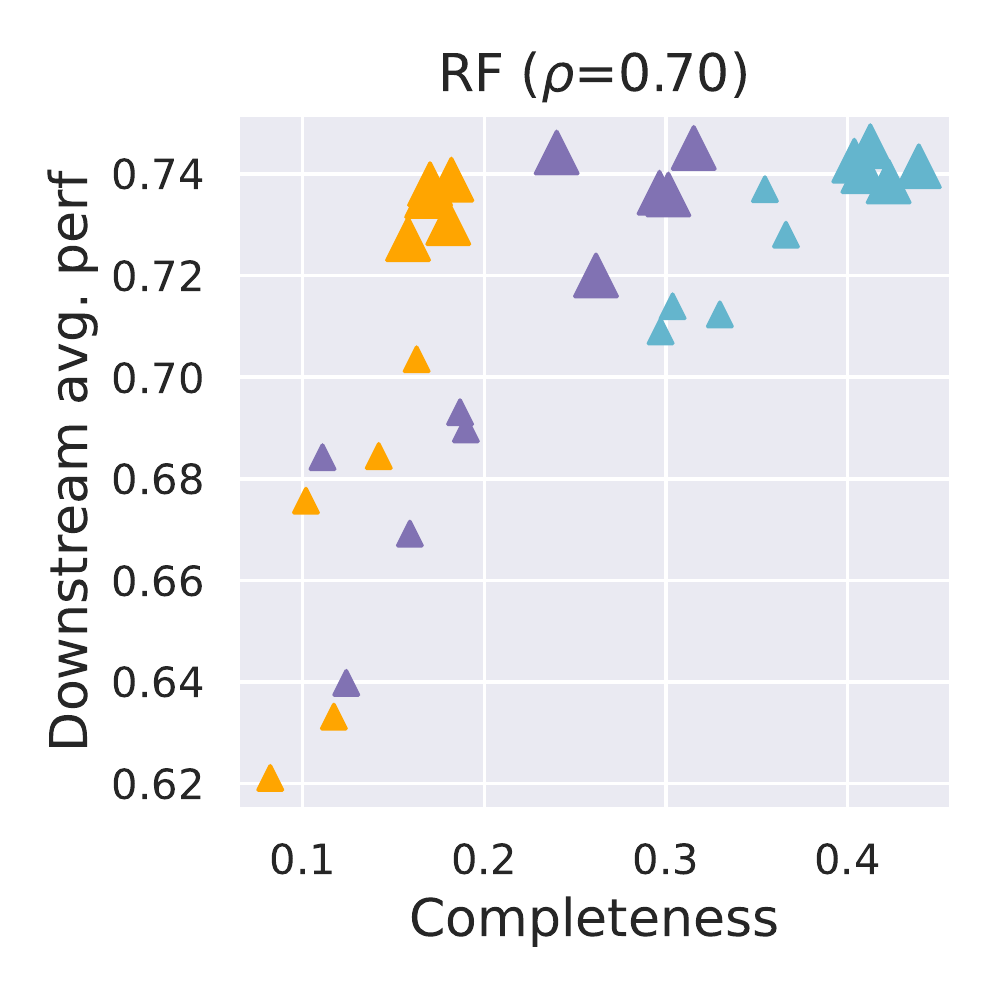}
    \caption{\textbf{\rebuttal{Completeness (C)} vs.\ \rebuttal{downstream performance.}} Scatter plots show 30 data points: 3 models (AEs, VAEs, $\beta$-VAEs) $\times$ 2 latent dimensionalities ($L=10$ and $L=50$) \rebuttal{$\times$ 5 random seeds.}}
    \label{fig:C_Downstream_corr}
\end{figure}

\begin{figure}[h!]
    \centering
        \includegraphics[width=0.45\linewidth]{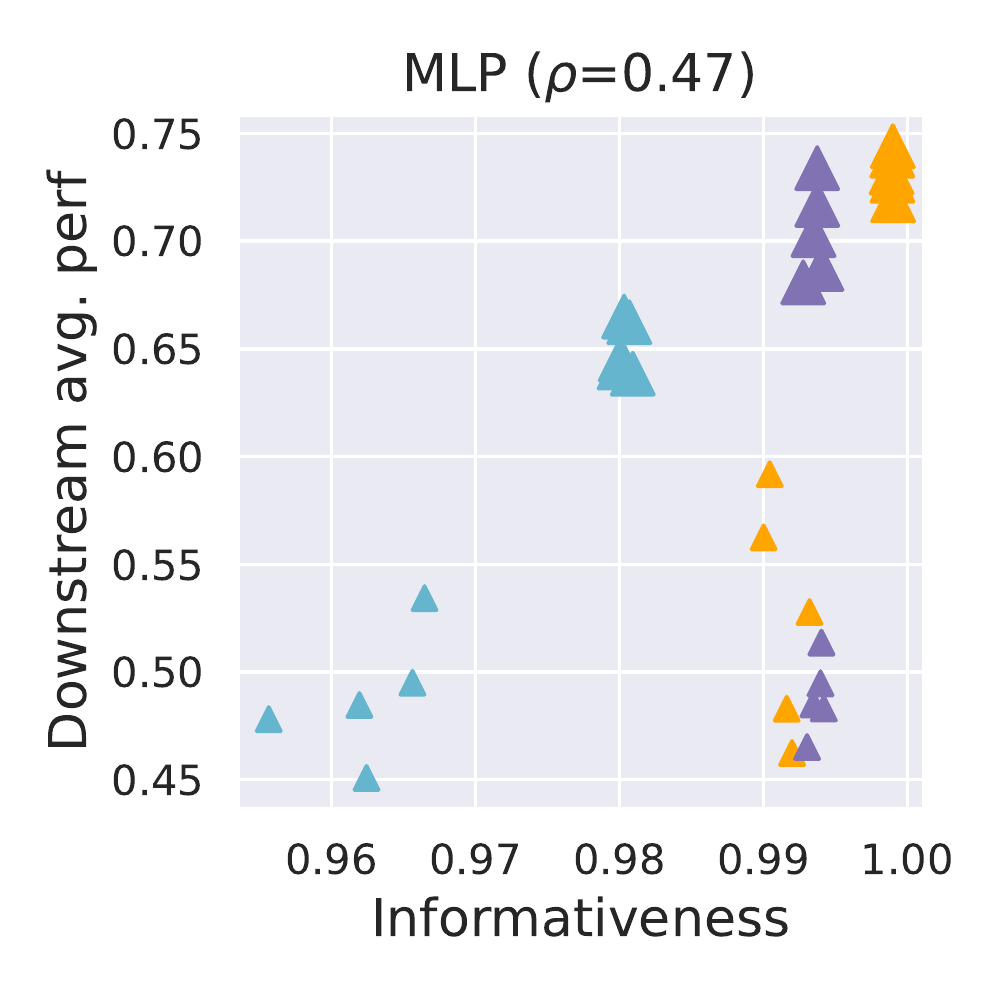}
        \includegraphics[width=0.45\linewidth]{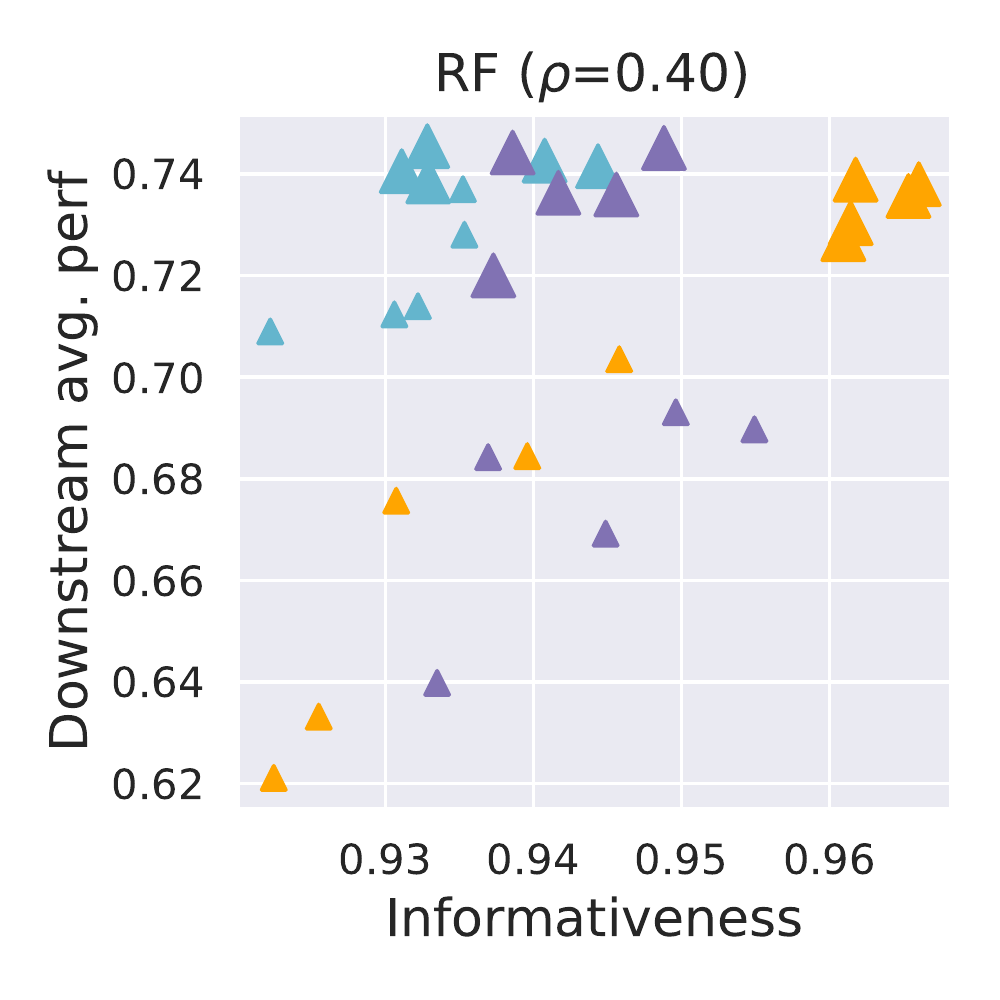}
    \caption{\textbf{\rebuttal{Informativeness (I)} vs.\ \rebuttal{downstream performance.}} Scatter plots show 30 data points: 3 models (AEs, VAEs, $\beta$-VAEs) $\times$ 2 latent dimensionalities ($L=10$ and $L=50$) \rebuttal{$\times$ 5 random seeds.}}
    \label{fig:I_Downstream_corr}
\end{figure}

\subsection{Different amounts of data}\label{app:dataset_size}
In \cref{fig:mpi3d_capacity_curves_few_data} we present loss-capacity curves obtained when using different amounts of data to train the MLP probes. As shown, larger datasets have smaller performance gaps between (i) synthetic and learned representations; and (ii) small and large representations.

\begin{figure}[h!]
\centering
\begin{subfigure}[b]{.2495\textwidth}
  \centering
  \includegraphics[width=1.0\linewidth]{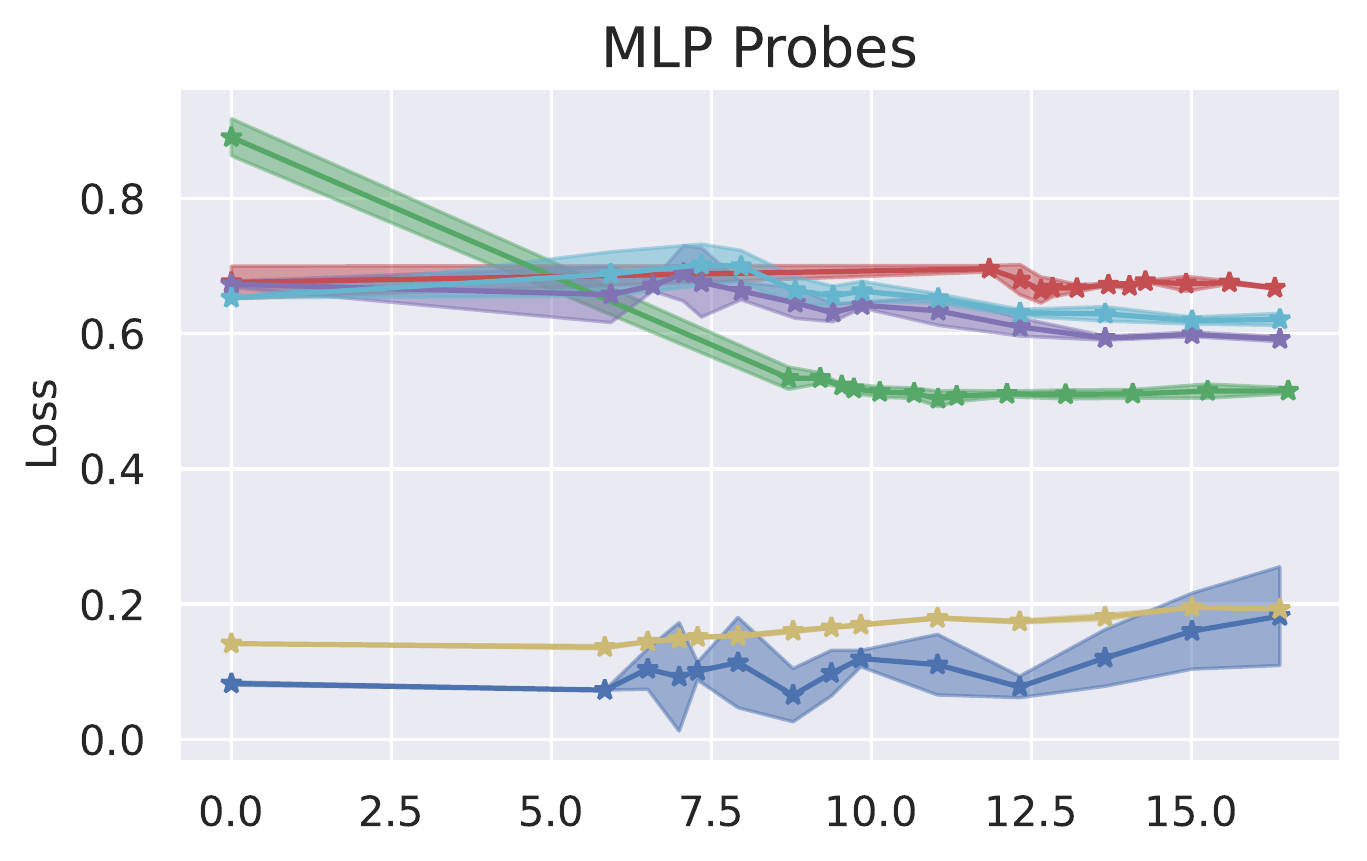}
\end{subfigure}\hfill
\begin{subfigure}[b]{.2495\textwidth}
  \centering
  \includegraphics[width=1.0\linewidth]{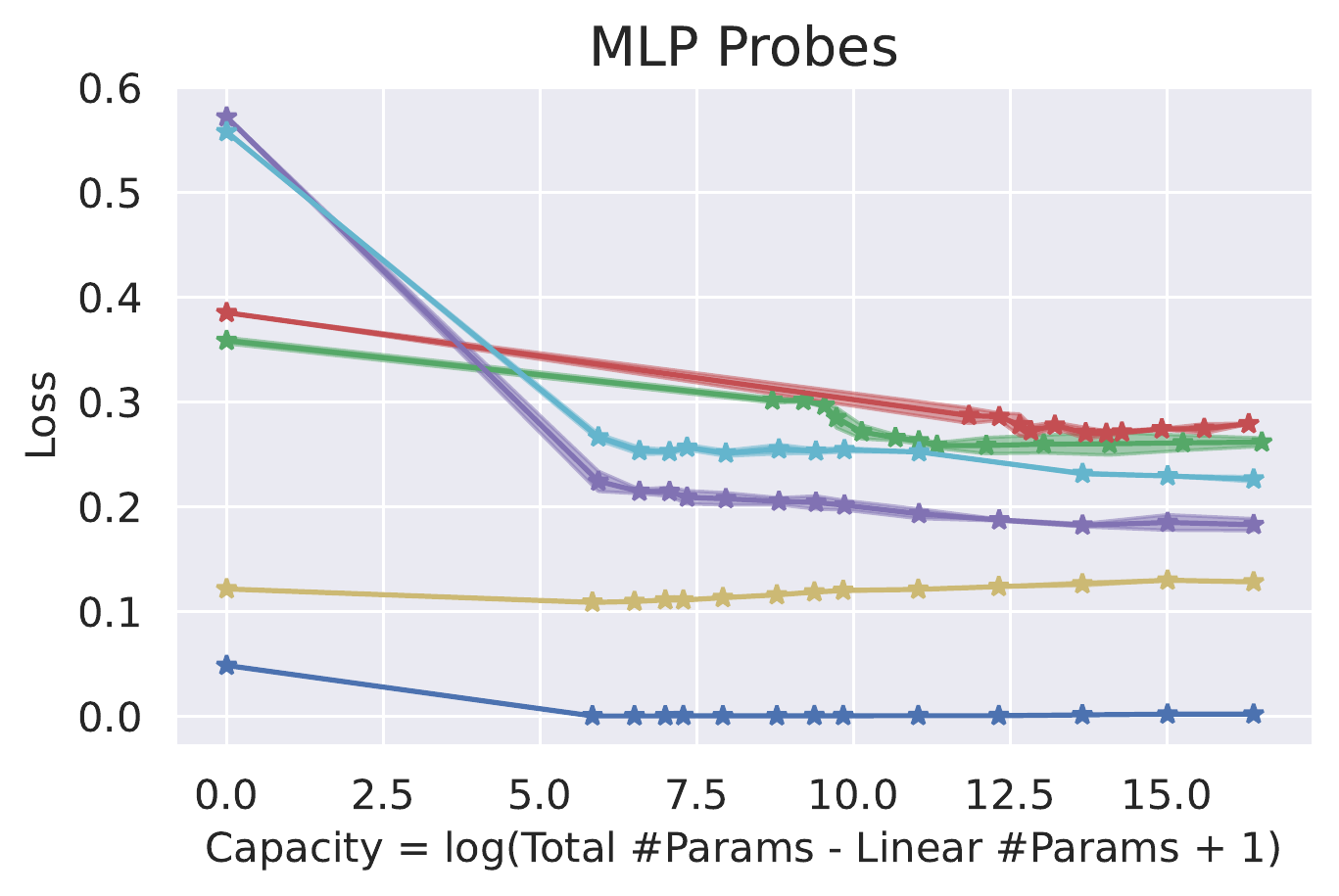}
\end{subfigure}\hfill
\begin{subfigure}[b]{.2495\textwidth}
  \centering
  \includegraphics[width=1.0\linewidth]{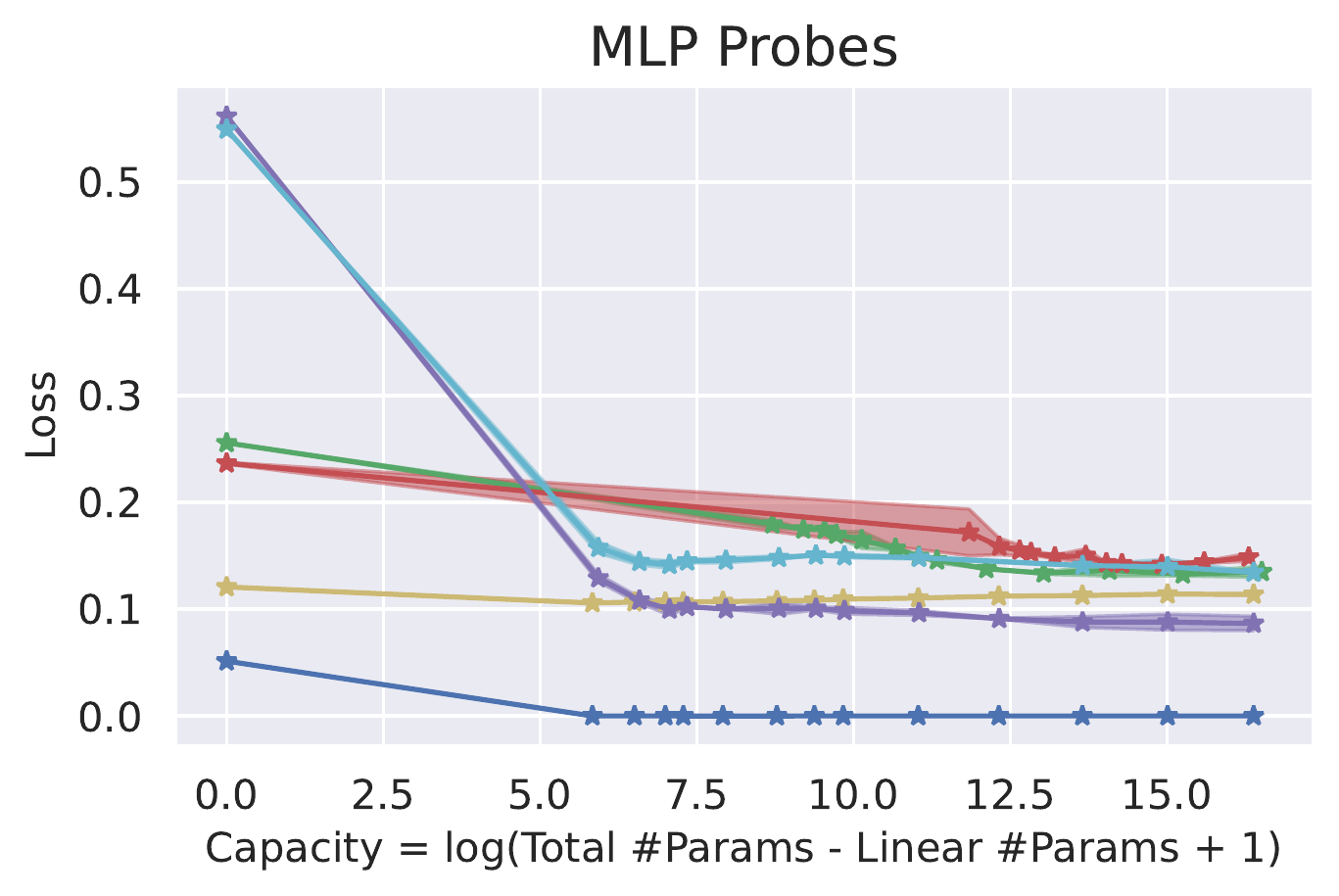}
\end{subfigure}\hfill
\begin{subfigure}[b]{.2495\textwidth}
  \centering
  \includegraphics[width=1.0\linewidth]{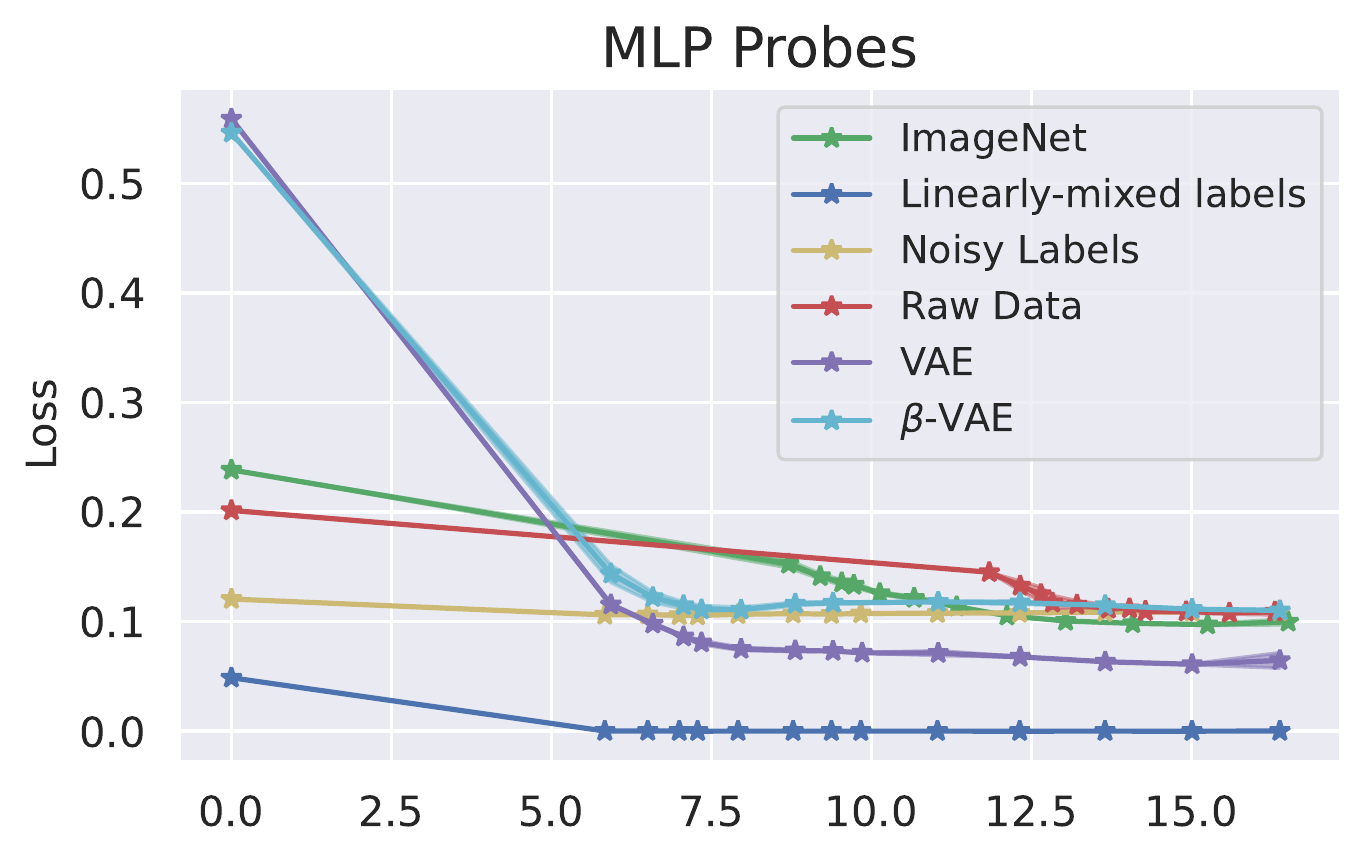}
\end{subfigure}\hfill
\caption{Loss-capacity curves for \texttt{MPI3D-Real} subsets of size 100, 1000, 5000 and 10000 respectively.
}
\label{fig:mpi3d_capacity_curves_few_data}
\end{figure}

\end{document}